\newcommand{\revision}[1]{{\color{black} #1}}
\newcommand{\centerhfill}[1][\quad]{\hspace{\stretch{0.5}}#1\hspace{\stretch{0.5}}}
\newcommand{\multilinecell}[3][c]
{
\begin{tabular}{@{}#1@{}}#2 \\ #3\end{tabular}
}
\crefname{hypothesis}{Hypothesis}{Hypotheses}
\crefname{property}{Property}{Properties}
\crefname{assumption}{Assumption}{Assumptions}
\newcommand{\crefbookmark}[2]{\texorpdfstring{\cref{#2}}{#1~\ref*{#2}}}
\title{
Control Barrier Functions for Collision Avoidance between Strongly Convex Regions
\thanks{
\textbf{Funding:} This work was partially supported through funding from the Tsinghua-Berkeley Shenzhen Institute (TBSI) program.
}
}
\author{
Akshay Thirugnanam\thanks{Department of Mechanical Engineering, UC Berkeley, CA, USA 
  (\email{akshay\_t@berkeley.edu}, \email{zengjunsjtu@berkeley.edu}, \email{koushils@berkeley.edu}).}
\and Jun Zeng\footnotemark[2]
\and Koushil Sreenath\footnotemark[2]
}
\DeclareMathOperator*{\argmin}{argmin} 
\begin{document}


\maketitle

\begin{abstract}
In this paper, we focus on non-conservative \revision{collision} avoidance between robots and obstacles with control affine dynamics and convex shapes.
System safety is defined using the minimum distance between the safe regions associated with robots and obstacles.
However, \revision{collision} avoidance using the minimum distance as a control barrier function (CBF) can pose challenges because the minimum distance is implicitly defined by an optimization problem and thus nonsmooth in general.
\revision{
We identify a class of state-dependent convex sets, defined as strongly convex maps, for which the minimum distance is continuously differentiable, and the distance derivative can be computed using KKT solutions of the minimum distance problem.
In particular, our formulation allows for ellipsoid-polytope collision avoidance and convex set algebraic operations on strongly convex maps.
We show that the KKT solutions for strongly convex maps can be rapidly and accurately updated along state trajectories using a KKT solution ODE.
Lastly, we propose a QP incorporating the CBF constraints and prove strong safety under minimal assumptions on the QP structure.
We validate our approach in simulation on a quadrotor system navigating through an obstacle-filled corridor and demonstrate that CBF constraints can be enforced in real time for state-dependent convex sets without overapproximations.
}

\end{abstract}

\begin{keywords}
control barrier functions,
convex analysis,
\revision{collision} avoidance,
nonlinear control,
discontinuous dynamical systems,
quadratic program
\end{keywords}


\section{Introduction}
\label{sec:introduction}

%
%
%

Safety-critical control and planning methods typically ensure collision avoidance by enforcing the minimum distance between robots and obstacles to be positive along the state trajectory.
For continuous-time safety-critical methods using control barrier functions (CBFs), collision avoidance can be enforced by choosing the CBF as the minimum distance between robots and obstacles.
Safety for CBF-based methods is guaranteed by enforcing the CBF constraint, which depends on the gradient (or generalized gradient) of the CBF.
However, when the shapes of the robots and obstacles are described by general convex sets, the minimum distance is implicitly defined as the solution to an optimization problem and may not be differentiable.
CBF-based collision avoidance methods often use overapproximations of robots and obstacles to obtain explicit, differentiable minimum distance functions.
\revision{However, such overapproximations can result in deadlocks and can be challenging to compute in real time when the robot or obstacle shapes are state-dependent.}
In this work, we consider a large class of state-dependent convex sets, defined by \emph{strongly convex maps}, for which the minimum distance function is continuously differentiable.
\revision{
We show that the distance derivatives can be computed using the KKT solutions of the minimum distance problem and that the KKT solutions can be quickly propagated along state trajectories.
Our proposed method demonstrates that for strongly convex maps, CBF constraints can be enforced in real time (at $\SI{500}{Hz}$) without overapproximations and with minimal computational penalties.
The code for the examples in the paper can be found in the repository\footnote{
\label{code}
\revision{A C\texttt{++} library implementing the proposed algorithm and the code for the examples in the paper can be found at \url{https://github.com/HybridRobotics/cbf-convex-maps}.}
}.
}

\subsection{Related work and contributions}
\label{subsec:related-work-and-contributions}

\subsubsection{Collision avoidance between convex sets}
\label{subsec:collision-avoidance-between-convex-sets}

\revision{
Safety in trajectory optimization and motion planning problems is often achieved by enforcing collision avoidance constraints using the minimum distance between robots and obstacles.
Collision avoidance can be enforced as hard or soft constraints (using a collision penalty cost term).
The minimum distance between convex sets is defined implicitly using an optimization problem, and thus, efficient methods to compute the distance and its gradient are required for solving safety-critical optimization problems.

The first approach to solving distance-based optimization problems is to compute the minimum distance explicitly and numerically approximate the gradient.
Methods to compute the distance (and the distance gradient) in this approach include signed distance fields (SDFs) \cite{ratliff2009chomp}, the GJK algorithm for convex sets \cite{gilbert1988fast,schulman2013finding,schulman2014motion}, and the growth distance metric \cite{ong1996growth,ong1994robot,tracy2023differentiable}.

The second approach is to overapproximate the shapes of robots and obstacles using convex sets for which the distance (and its gradient) can be easily computed.
When the robots and obstacles are overapproximated by lines, planes, circles, ellipses, or spheres~\cite{kalakrishnan2011stomp, chen2019autonomous, singh2021optimizing}, the distance (or equivalent metrics) can be easily obtained.
While efficient methods to compute overapproximations have been proposed \cite{zhu1999new,todd2007khachiyan}, such methods cannot be used in real-time implementations for state-dependent robot shapes because the overapproximations have to be recomputed at each state.
Another method, dual to robot and obstacle shape overapproximation, is free space underapproximation.
In such methods, the obstacle-free space is underapproximated by ellipsoids or polytopes \cite{barbosa2020provably,deits2015computing,yan2015closed}.

The third approach is to convert implicitly-defined distance-based safety constraints to explicit constraints.
When the obstacle shapes can be described by polyhedra, mixed-integer programming can be used to enforce safety constraints by partitioning the obstacle-free space \cite{schouwenaars2001mixed, grossmann2002review, richards2005mixed}.
However, such methods cannot be used for real-time implementations.
Convex duality theory can be used to reformulate minimum distance constraints as maximum separation constraints \cite{zhang2018autonomous,zeng2020differential,firoozi2021formation,thirugnanam2022safetycritical}.
While computationally faster than mixed-integer formulations, real-time applications are limited to linear systems or short-horizon MPC use cases.
}

\subsubsection{Collision avoidance using control barrier functions}
\label{subsec:collision-avoidance-using-control-barrier-functions}

\revision{
Similar to control Lyapunov functions (CLFs) for stabilization, control barrier functions (CBFs) are used to encode safe sets and enforce continuous-time safety constraints for dynamical systems \cite{ames2019control}.
CBF constraints can be used to design CBF-QP-based safety filters that guarantee safety by minimally perturbing a given reference control input \cite{ames2014rapidly,ames2016control}.
The CBF-QP formulation enables safe real-time operation for dynamical systems due to the low computational complexity of the CBF-QP.
However, accurate gradient (or generalized gradient) information is required to enforce CBF constraints and thus theoretically guarantee safety using CBFs.
This can pose challenges for the methods described in the first approach in \cref{subsec:collision-avoidance-between-convex-sets}, even if the gradient is explicitly computable and accurate almost everywhere on the state space \cite[Ex.~16]{cortes2008discontinuous}.

One method to obtain an explicit, differentiable distance function is to overapproximate robot and obstacle shapes.
Previous works have considered point-masses \cite{chen2017obstacle}, circles \cite{reis2020control}, parabolas \cite{ferraguti2020control} and high-dimensional spheres \cite{wu2015safety}.
Collision avoidance approaches for ellipsoids and conic sections have also been proposed, including separating plane-based CBFs for ellipsoidal agents \cite{nishimoto2022collision,funada2025collision}, collision cones for quadric functions \cite{best2016real,kashyap20243d,dhal2023robust}, and determinant-based closed-form CBFs for ellipsoids \cite{choi2006continuous,verginis2019closed}.
Our previous work in \cite{thirugnanam2022duality} discussed collision avoidance for polytopes using CBFs.
We note that previous works on CBFs have not considered ellipsoid-polytope collision avoidance, which is commonly required when robot links are defined by ellipsoids and obstacles by polytopes.

The \emph{first main contribution} of our paper is to show that for a large class of convex sets, defined by \emph{smooth} and \emph{strongly convex maps}, the minimum distance is continuously differentiable and the derivative can be computed using KKT solutions.
Our formulation includes ellipsoid-polytope collision avoidance and state-dependent convex safe regions and allows for algebraic operations (such as Minkowski sums, Cartesian products, intersections, and projections) on strongly convex maps.
Further, we show that the KKT solutions can be propagated quickly along a state trajectory using a KKT solution ODE.
Our simulations demonstrate that the KKT solutions can be accurately updated in real time (on the order of $10 \mu s$).
Combining the above two results, we propose a CBF-QP with the same sparsity structure as the explicit distance case (note that this is not the case in \cite{thirugnanam2022duality}).
Thus, we claim that for strongly convex maps, collision avoidance using CBFs can be directly implemented without overapproximations and with minimal computational penalties, allowing for real-time collision avoidance with complex safe regions and fewer deadlocks.
}

\subsubsection{CBFs for discontinuous dynamical systems}
\label{subsec:nonsmooth-cbfs-for-discontinuous-dynamical-systems}

\revision{
The proof of safety for CBFs assumes a locally Lipschitz continuous feedback control law that satisfies the CBF constraint \cite{ames2019control}.
Since CBFs are generally used as safety filters in the form of CBF-QPs \cite{ames2014rapidly,ames2016control}, many previous works prove safety by considering the local Lipschitz continuity of the CBF-QP optimal solution.
The Lipschitz continuity of the feedback control law can be used to prove the uniqueness of closed-loop state trajectories.
However, guaranteeing the Lipschitz continuity of the CBF-QP optimal solution requires many assumptions on the CBF-QP structure, which may not be guaranteed in practice \cite{morris2013sufficient,morris2015continuity}.
Further, such assumptions require that the CBF-QP be solved to optimality, i.e., when the CBF-QP solution is suboptimal, the Lipschitz continuity property may not hold.
While previous works on nonsmooth CBFs have extended the existing safety results to nonsmooth functions and discontinuous dynamical systems \cite{glotfelter2017nonsmooth,glotfelter2018boolean,glotfelter2019hybrid,glotfelter2021nonsmooth}, the properties of the CBF-QP are not extensively discussed.

The \emph{second main contribution} of our paper is to show that safety for the closed-loop system can be guaranteed under relaxed assumptions on the CBF-QP structure.
We relax the assumptions on the CBF-QP by forgoing the uniqueness property of the closed-loop state trajectory and proving \emph{strong safety}, i.e., that the closed-loop system remains safe for all state trajectories.
Our proof of safety requires minimal assumptions on the CBF-QP structure (continuity of the CBF-QP constraints) and is also valid for suboptimal (but feasible) optimal solutions and arbitrary costs.
}







\subsection{Notation}
\label{subsec:notation}

For $n \in \mathbb{N}$, $[n]$ denotes the set $\{1, 2, ..., n\}$.
Superscripts of variables, such as $x^i$, denote the robot/set index.
Subscripts of variables, such as $A_{k}$, denote the row index of vectors or matrices, and stylized subscripts, such as $A_{\mathcal{R}}$, denote the sub-matrix obtained by selecting those rows whose index lies in the set $\mathcal{R}$.
For proofs, subscripts of variables with parenthesis, such as $a_{(m)}$, denote the $m$-th element in the sequence $\{a_{(m)}\}$.
\Cref{tab:symbols-notations} summarizes the symbols used in the paper.

\subsection{Paper structure}
The paper is organized as follows.
\revision{\Cref{sec:problem-description-outline-main-results} describes the problem statement and summarizes the main results of the paper.}
A brief introduction to discontinuous dynamical systems and nonsmooth CBFs is presented in \cref{sec:background}.
We analyze the continuity and differentiability properties of the minimum distance problem between strongly convex maps in \cref{sec:minimum-distance-between-strongly-convex-maps-smoothness-properties} and propose a CBF-QP-based obstacle avoidance formulation for strongly convex maps in \cref{sec:ncbfs-for-strictly-convex-sets}.
We validate our approach in simulations in \cref{sec:results} and present concluding remarks in \cref{sec:conclusion}.

\section{Problem description and outline of main results}
\label{sec:problem-description-outline-main-results}

\revision{\phantom{}}
We consider the enforcement of safety constraints between multiple controllable robots and obstacles using control barrier functions (CBFs).
Each controlled robot is associated with some states, control inputs, system dynamics, which describe the state evolution, and geometries, which represent the \revision{safe regions associated with each robot}.
\revision{These safe regions can include the physical space occupied by the robots, uncertainty bounds, and other safety measures needed to guarantee the safe operation of the system (for examples, see \cref{sec:results}).}
Enforcing safety for such systems requires their feedback control laws to guarantee that no two robots or obstacles \revision{have their respective safe regions collide}, i.e., the minimum distance between \revision{the safe regions of} any two robots \revision{(or robot-obstacle pair)} is always greater than zero.

\subsection{Problem description}
\label{subsec:problem-description}

\revision{\phantom{}}
Consider $N$ robots with the Robot $i$ having states $x^i \in \mathcal{X}^i \subset \mathbb{R}^n$ and nonlinear, control-affine dynamics:
\begin{equation}
\label{eq:n-affine-systems}
\dot{x}^i(t) = f^i(x^i(t)) + g^i(x^i(t))u^i(t), \quad i \in [N],
\end{equation}
where $f^i: \mathcal{X}^i \rightarrow \mathbb{R}^n$, $g^i: \mathcal{X}^i \rightarrow \mathbb{R}^{n\times m}$, and $u^i(t) \in \mathcal{U}^i \subset \mathbb{R}^m$.
\revision{
For the rest of the paper, we assume that for all $i \in [N]$, $f^i$ and $g^i$ are continuous functions, $\mathcal{X}^i$ is an open connected set, and $\mathcal{U}^i$ a convex compact set.
For the convenience of notation, we represent obstacles as robots with no control inputs, i.e., with $m = 0$.
Thus, we only consider safety between robots for the theoretical development.
}

\revision{The state-dependent safe region associated with Robot $i$ is described by the set-valued map $\mathcal{C}^i: \mathcal{X}^i \rightarrow 2^{\mathbb{R}^l}$, where $\mathcal{C}^i(x^i)$ is the safe region for Robot $i$ at state $x^i$.}
Here $2^{\mathbb{R}^l}$ denotes the power set of $\mathbb{R}^l$.
We assume that $\mathcal{C}^i$ has the following form:
\begin{equation}
\label{eq:lipschitz-convex-set}
\mathcal{C}^i(x^i) = \{z^i \in \mathbb{R}^l: A^i_k(x^i, z^i) \leq 0, \ \forall k \in [r^i]\},
\end{equation}
where $A^i_k: \mathcal{X}^i \times \mathbb{R}^l \rightarrow \mathbb{R}$, and $r^i$ is the number of constraints used to define $\mathcal{C}^i(x^i)$.
\revision{Let $A^i: \mathcal{X}^i \times \mathbb{R}^l \rightarrow \mathbb{R}^{r^i}$ be defined as $A^i(x^i, z^i) = (A^i_1(x^i, z^i), ..., A^i_{r^i}(x^i, z^i))$.}
\revision{We assume that the set-valued safe region maps $\mathcal{C}^i$ satisfy certain smoothness properties, which will allow us to enforce CBF constraints to guarantee safety.
In particular, we will assume that the set-valued maps $\mathcal{C}^i$ are \emph{strongly convex maps}, which are central to the theory developed in this paper.}

\revision{To define strongly convex maps, we first define the set of active indices.}
For a given state $x^i$ and a point $z^i \in \mathcal{C}^i(x^i)$, the set of active indices is defined as,
\begin{equation}
\label{eq:strict-convex-index-set-j}
\mathcal{J}^i(x^i, z^i) := \{k \in [r^i]: A^i_k(x^i, z^i) = 0\}.
\end{equation}
%
Now, we define \emph{smooth convex maps} and strongly convex maps. 
\begin{definition}[Smooth convex map]%
\label{def:smooth-convex-set}%
A set-valued map $\mathcal{C}^i$ of the form \cref{eq:lipschitz-convex-set} is called a smooth convex map if:
\begin{enumerate}[ref={\thedefinition.\arabic*}, leftmargin=*]
    \item \label[definition]{subdef:smooth-convex-set-c2}
    $A^i_k$ is twice continuously differentiable on $\mathcal{X}^i \times \mathbb{R}^l$ $\forall k \in [r^i]$.
    
    \item \label[definition]{subdef:smooth-convex-set-licq}
    The set $\mathcal{C}^i(x^i)$ satisfies linear independence constraint qualification (LICQ), i.e., the set of gradients of active constraints, $\{\nabla_{z^i} A^i_k(x^i,z^i): k\in \mathcal{J}^i(x^i, z^i)\}$ is linearly independent for all $z^i \in \mathcal{C}^i(x^i)$ and $x^i \in \mathcal{X}^i$.
    
    \item \label[definition]{subdef:smooth-convex-set-slater}
    For all $x^i \in \mathcal{X}^i$, $\mathcal{C}^i(x^i)$ is a compact set and has a non-empty interior.
\end{enumerate}

\end{definition}

\begin{definition}[Strongly convex map]%
\label{def:strongly-convex-set}%
A smooth convex map $\mathcal{C}^i$ is called a strongly convex map \revision{if $\nabla_{z^i}^2A^i_k(x^i, z^i) \succ 0$, $\forall k \in [r^i]$, $z^i \in \mathbb{R}^l$, and $x^i \in \mathcal{X}^i$.}

\end{definition}
\revision{We note that since $A^i_k$ is twice continuously differentiable and $\mathcal{C}^i(x^i)$ is compact (by \cref{subdef:smooth-convex-set-slater}), $\nabla_{z^i}^2A^i_k(x^i, \cdot)$ is uniformly positive definite in any compact neighborhood of $\mathcal{C}^i(x^i)$; this justifies the term strongly convex map.}
\revision{
For a pair of robots $(i,j)$ for which collision avoidance is to be enforced, we call the tuple $(\mathcal{C}^i, \mathcal{C}^j)$ a \emph{collision pair}.
\Cref{fig:quadrotor-safe-set} shows two strongly convex maps for a quadrotor system.
}
The following assumption is used throughout the rest of the paper.

\begin{assumption}[Strongly convex pair]%
\label{assum:strongly-convex-pair}%
For all $i \in [N]$, $\mathcal{C}^i$ is a smooth convex map.
Further, for all $i, j \in [N]$ such that $(\mathcal{C}^i, \mathcal{C}^j)$ is a collision pair, at least one is a strongly convex map.

\end{assumption}

\revision{
For a collision pair $(\mathcal{C}^i, \mathcal{C}^j)$ and states $x^i \in \mathcal{X}^i$, $x^j \in \mathcal{X}^j$, (the square of) }
the minimum distance $h^{ij}(x^i, x^j)$ between $\mathcal{C}^i(x^i)$ and $\mathcal{C}^j(x^j)$ is given by the optimization problem,
\begin{equation}
\label{eq:convex-set-min-dist}
\begin{split}
h^{ij}(x^i, x^j) := & \ \min_{(z^i, z^j)} \ \ \{\lVert z^i - z^j \rVert_2^2: \ z^i \in \mathcal{C}^i(x^i), \ z^j \in \mathcal{C}^j(x^j)\}, \\
\revision{=} & \ \revision{\min_{(z^i, z^j)} \ \ \{\lVert z^i - z^j \rVert_2^2: \ A^i(x^i, z^i) \leq 0, A^j(x^j, z^j) \leq 0\}.} 
\end{split}
\end{equation}
\revision{Safety for the collision pair $(\mathcal{C}^i, \mathcal{C}^j)$ is expressed by the condition $h^{ij}(x^i, x^j) > 0$ for the entire state trajectory (safety will be concretely defined in \cref{subsec:nonsmooth-control-barrier-functions}).}

\begin{figure}[tbp]
    \centering
    \subfloat[
    \revision{Robust safe region for a quadrotor, including quadrotor shape and position uncertainty.}
    ]{
    \label{subfig:quadrotor-uncertainty-safe-set}
    {
    \includegraphics[width=4.8in]{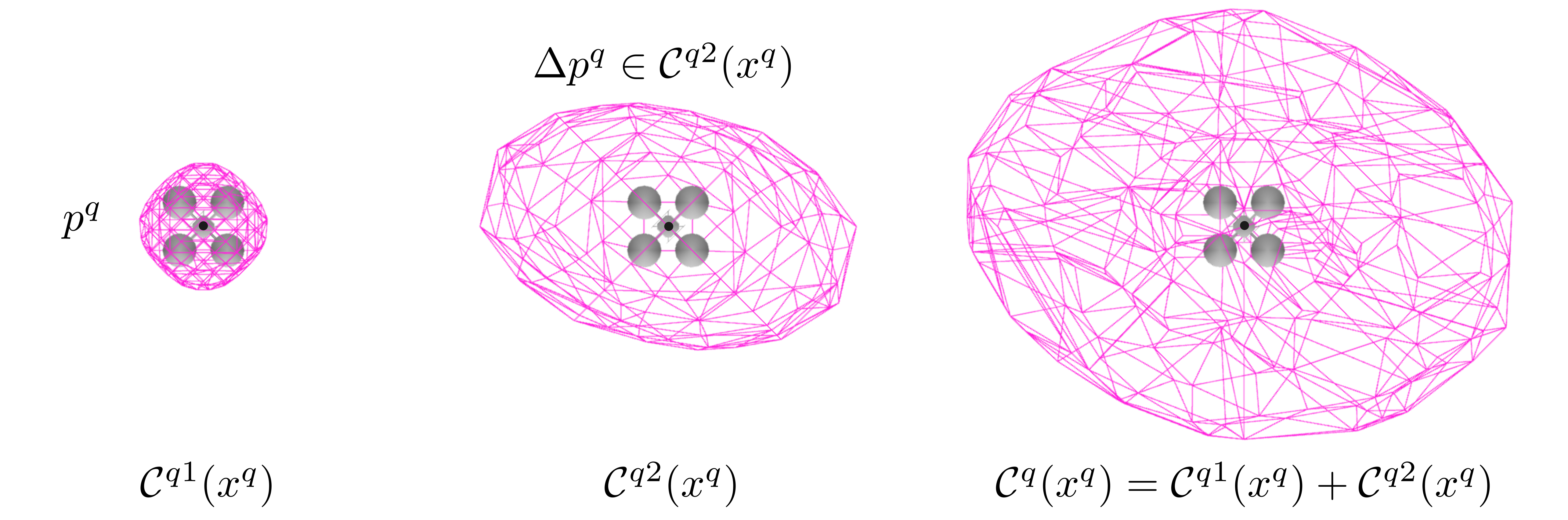} 
    }%
    }%
    
    \subfloat[
    \revision{Dynamic safe region for a quadrotor, including quadrotor shape and braking corridor.}
    ]{
    \label{subfig:quadrotor-corridor-safe-set}
    {
    \includegraphics[width=4.8in]{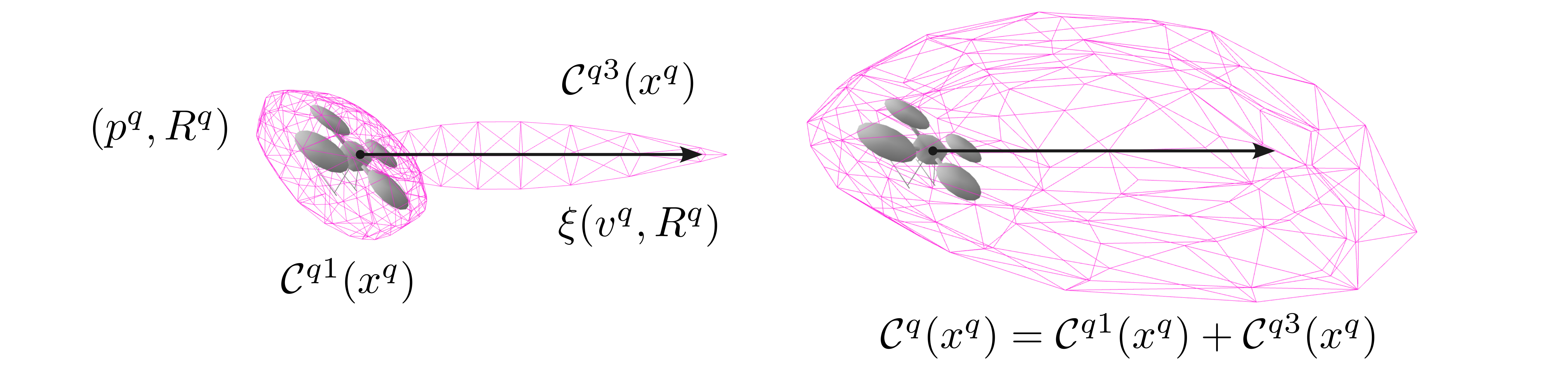}
    }%
    }%
    \caption{
    \revision{
    Examples of safe regions for a quadrotor system with state $x^q = (p^q, R^q, v^q) \in SE(3) \times \mathbb{R}^3$.
    In \cref{subfig:quadrotor-uncertainty-safe-set,subfig:quadrotor-corridor-safe-set}, the set $\mathcal{C}^{q1}(x^q)$ represents the shape of the quadrotor at the state $x^q$.
    \Cref{subfig:quadrotor-uncertainty-safe-set} depicts a safe region that considers the position uncertainty (represented by $\mathcal{C}^{q2}(x^q)$) of the quadrotor.
    The robust safe region $\mathcal{C}^{q}(x^q)$ is constructed using the Minkowski sum of the quadrotor shape $\mathcal{C}^{q1}(x^q)$ and uncertainty set $\mathcal{C}^{q2}(x^q)$.
    \Cref{subfig:quadrotor-corridor-safe-set} depicts a safe region that expands the quadrotor shape set along a braking corridor $\mathcal{C}^{q3}(x^q)$ (see \cref{subsec:example-cbf-obstacle avoidance}).
    The braking distance vector is given by $\xi(v^q, R^q)$ and parameterized by the velocity and orientation of the quadrotor.
    The dynamic safe region $\mathcal{C}^{q}(x^q)$ is constructed using the Minkowski sum of the quadrotor shape $\mathcal{C}^{q1}(x^q)$ and braking corridor $\mathcal{C}^{q3}(x^q)$.
    The safe regions for both examples can be represented by strongly convex maps (see \cref{def:strongly-convex-set}) and are used for the results in \cref{sec:results}.
    }
    }
    \label{fig:quadrotor-safe-set}
\end{figure}

\subsection{Outline of the main results}
\label{subsec:outline-of-the-main-results}

\revision{\phantom{}}
\revision{
In this subsection, we outline the three main results in this paper. 
All results are stated for a collision pair $(\mathcal{C}^i, \mathcal{C}^j)$.
First, we state a preliminary result on the KKT solutions of \cref{eq:convex-set-min-dist}.

\emph{
(Derivatives of the KKT solution, \cref{lem:strict-convex-unique-continuous-kkt-solution}, \cref{prop:strict-convex-kkt-solution-directional-derivative})
Let \cref{assum:strongly-convex-pair} hold, and consider $x = (x^i, x^j) \in \mathcal{X}^i \times \mathcal{X}^j$ such that $h^{ij}(x) > 0$.
Then, there is a unique, continuous KKT solution $(z^*(x'), \lambda^*(x'))$ for $x'$ in a neighborhood of $x$.
Further, $(z^*(\cdot), \lambda^*(\cdot))$ is directionally differentiable at $x$.
}
}

\revision{
Next, we summarize the three main results in the paper.
The first result allows us to propagate KKT solutions along differentiable state trajectories quickly.
\begin{enumerate}[labelindent=0pt, itemindent=!, leftmargin=*]
    \item \emph{
    (KKT solution ODE, \cref{thm:distance-ode})
    Let \cref{assum:strongly-convex-pair} hold, and consider a differentiable state trajectory $x(t) = (x^i(t)$, $x^j(t))$ such that $h^{ij}(x(t)) > 0$ $\forall t \geq t_0$.
    Then, the KKT solution $(z^*(x(t)), \lambda^*(x(t)))$ is differentiable with respect to $t$ and can be obtained as a solution of an ODE starting from $(z^*(x(t_0)), \lambda^*(x(t_0)))$.
    }
\end{enumerate}
\vspace{5pt}
}

\revision{
The second result discusses the smoothness properties of the minimum distance function and computes the minimum distance derivatives using the KKT solution.
The derivative of $h^{ij}$ will be used to enforce the CBF constraint.
\begin{enumerate}[labelindent=0pt, itemindent=!, leftmargin=*]
    \setcounter{enumi}{1}
    \item \emph{
    (Derivative of minimum distance, \cref{thm:minimum-distance-derivative})
    Let \cref{assum:strongly-convex-pair} hold, and consider $x = (x^i, x^j)$ such that $h^{ij}(x) > 0$.
    Then, the minimum distance function $h^{ij}$ 
    is continuously differentiable in a neighborhood of $x$, with
    \vspace{-6pt}
    \begin{equation*}
        D_x h^{ij}(x) = \bigl(\lambda^{i*}(x)^\top D_{x^i} A^i(x^i, z^{i*}(x)), \, \lambda^{j*}(x)^\top D_{x^j} A^j(x^j, z^{j*}(x)) \bigr).
    \end{equation*}
    }
\end{enumerate}
\vspace{2pt}
}

\revision{
The final result provides a CBF-QP-based feedback control law that guarantees the safety of the closed-loop system.
In particular, even if the feedback control law is not locally Lipschitz (and closed-loop state trajectories are not unique), the CBF-QP guarantees \emph{strong safety}, i.e., all closed-loop trajectories are safe.
Thus, unlike many previous works, we prove safety without relying on the Lipschitz continuity of the optimal solution of the CBF-QP.
\begin{enumerate}[labelindent=0pt, itemindent=!, leftmargin=*]
    \setcounter{enumi}{2}
    \item \emph{
    (CBF-QP for strongly convex pairs, \cref{thm:cbf-qp})
    Let \cref{assum:strongly-convex-pair} hold and $x_0 = (x_0^i, x_0^j)$ be such that $h^{ij}(x) > 0$.
    Then, any measurable feedback control law $u^*_{fb}: \mathcal{X}^i \times \mathcal{X}^j \rightarrow \mathcal{U}^i \times \mathcal{U}^j$ that is feasible for the CBF-QP \cref{eq:cbf-qp} guarantees strong safety for the closed-loop system.
    }
\end{enumerate}
}

\revision{Our formulation also allows for algebraic operations on the strongly convex maps, including projections, Cartesian products, Mikowski sums, and intersections.}

\revision{
In \cref{sec:minimum-distance-between-strongly-convex-maps-smoothness-properties,sec:ncbfs-for-strictly-convex-sets}, we discuss the smoothness properties of $h^{ij}$ and provide proofs for the main results of the paper.
We start by presenting some background on differential inclusions and nonsmooth CBFs in \cref{sec:background}.
}

\section{Background}
\label{sec:background}


\revision{
Consider the dynamical system in \cref{eq:n-affine-systems} with the inputs given by a state feedback control law as $u^i(t) = u^i_{fb}(x(t))$, where $x(t)$ is the state of the full system at time $t$.
Then, the closed-loop system dynamics can be written as
\begin{equation}
\label{eq:n-affine-systems-feedback}
\dot{x}^i(t) = f^i(x^i(t)) + g^i(x^i(t))u^i_{fb}(x(t)), \quad i \in [N].
\end{equation}
If the functions $f^i$, $g^i$, and $u^i_{fb}$ are locally Lipschitz continuous in $x$, then the closed-loop system has a unique local solution (see~\cite[Thm.~54]{sontag2013mathematical} for weaker conditions).
However, when the feedback control law $u^i_{fb}$ is computed using a CBF-QP, guaranteeing Lipschitz continuity of $u^i_{fb}$ requires assumptions on the parametric form of the QP that may not hold in practice~\cite{morris2013sufficient,morris2015continuity}.
Further, such conditions only guarantee Lipschitz continuity when the CBF-QP is solved to optimality.
For an example of a strictly convex parametric QP with non-Lipschitz optimal solution, see \cite{robinson1982generalized}.
}

\revision{
When $u^i_{fb}$ is only continuous (or discontinuous), the closed-loop system may not have a unique solution.
Thus, we forgo the Lipschitz continuity and continuity properties of $u^i_{fb}$ and prove \emph{strong safety} for the closed-loop system, i.e., all closed-loop system trajectories are safe.
As we will show in \cref{thm:cbf-qp}, this approach has the desirable property that any measurable feedback control law $u^i_{fb}$, such that $u^i_{fb}(x)$ is feasible for the CBF-QP for all $x$, guarantees strong safety.
With this motivation, we first present some background on Differential inclusions and Filippov solutions.
}

\subsection{Discontinuous dynamical systems}
\label{subsec:discontinuous-dynamical-systems}

To have a well-defined notion of a solution to an ODE with a discontinuous RHS, we can study the properties of differential inclusions of the form:
\begin{equation}
\label{eq:differential-inclusion}
\dot{x}^i(t) \in F^i(x(t)), \quad i \in [N], \quad x(t_0) = x_0,
\end{equation}
where $F^i:\mathcal{X} \rightarrow 2^{\mathbb{R}^n}$ is a set-valued map.
We also need the notion of continuity for set-valued maps.

\begin{definition}[Semi-continuity]~\cite[Sidebar~7]{cortes2008discontinuous}%
\label{def:usc-lsc}
A set-valued map $\Gamma: \mathcal{X} \rightarrow 2^{\mathbb{R}^n}$ is upper semi-continuous (respectively, lower semi-continuous) at $a \in \mathcal{X}$, if $\forall \epsilon > 0$, $\exists \delta > 0$ such that $\forall x \in \mathcal{B}_\delta(a)$, $\Gamma(x) \subset \Gamma(a) + \mathcal{B}_\epsilon(0)$ (respectively, $\Gamma(a) \subset \Gamma(x) + \mathcal{B}_\epsilon(0)$).
Here $\mathcal{B}_r(x) := \{y: \lVert x-y\rVert < r\}$ is an open ball of radius $r$ around $x$, and the addition between sets is defined as Minkowski addition~\cite{gritzmann1993minkowski}.
For sets $A$ and $B$, the Minkowski addition is defined as $A+B := \{a+b: a \in A, b \in B\}$.
$\Gamma$ is continuous at $a \in \mathcal{X}$ if it is both upper and lower semi-continuous at $a$.

\end{definition}

For a differential inclusion of the form of \cref{eq:differential-inclusion}, a solution can be defined as follows.

\begin{definition}[Caratheodory solution]~\cite{cortes2008discontinuous}
\label{def:caratheodory-solution}
A Caratheodory solution to \cref{eq:differential-inclusion} on $[t_0, T]$ is an absolutely continuous map $x^i: [t_0, T] \rightarrow \mathcal{X}^i$, $i \in [N]$, satisfying \cref{eq:differential-inclusion} for almost all $t \in [t_0, T]$, i.e., the set of times when \cref{eq:differential-inclusion} is not satisfied has measure zero.

\end{definition}

We now examine the dynamical system \revision{\cref{eq:n-affine-systems-feedback}} with a discontinuous feedback control law \revision{$u^i_{fb}(x)$} and convert it into the form \cref{eq:differential-inclusion} to obtain a Caratheodory solution for the system.
To do this, we use the Filippov operator.
For a vector field $f^i: \mathcal{X} \rightarrow \mathbb{R}^n$, the Filippov operator on $f^i$, $F[f^i]: \mathcal{X} \rightarrow 2^{\mathbb{R}^n}$, is defined as \cite[Eq.~(19,30)]{cortes2008discontinuous},
\begin{equation}
\label{eq:filippov-operator-def-limit}
F[f^i](x) := \bigcap_{\mu(\mathcal{Q}) = 0}\text{cl } \text{conv}\bigl\{\lim_{k\rightarrow \infty} f^i(x_{(k)}): x_{(k)} {\rightarrow} x, \; x_{(k)} {\notin} \mathcal{Q}_d \cup \mathcal{Q}\bigr\},
\end{equation}
where `cl' denotes closure, `conv' denotes convex hull, $\mathcal{Q}_d$ is the set of the points of discontinuities of $f^i$, which is measure-zero, and $\mu$ denotes Lebesgue measure.

Let, \revision{for all $i \in [N]$, $u^i_{fb}: \mathcal{X} \rightarrow \mathcal{U}^i$} be some measurable feedback control law.
The discontinuous dynamical system \revision{\cref{eq:n-affine-systems-feedback}} can be converted to a differential inclusion of the form \cref{eq:differential-inclusion}, using the Filippov operator $F[\cdot]$ as:
\begin{equation}
\label{eq:filippov-operator-def}
\dot{x}^i(t) \in \revision{F[f^i+g^iu^i_{fb}](x(t)), \quad i \in [N]}.
\end{equation}
We can now define a solution of \revision{\cref{eq:n-affine-systems-feedback}} as a solution of \cref{eq:filippov-operator-def}.

\begin{definition}[Filippov solution]~\cite[Eq.~(21)]{cortes2008discontinuous}
\label{def:filippov-solution}
A Filippov solution of \revision{\cref{eq:n-affine-systems-feedback}} is a Caratheodory solution of \cref{eq:filippov-operator-def}.

\end{definition}

Caratheodory solutions are defined for differential inclusions of the form \cref{eq:differential-inclusion}, whereas Filippov solutions are defined for ODEs of the form \revision{\cref{eq:n-affine-systems-feedback}} by converting the ODE into a differential inclusion using \cref{eq:filippov-operator-def}.
The following result guarantees the existence of Filippov solutions.

\begin{proposition}[Existence of Filippov solution]~\cite[Prop.~3]{cortes2008discontinuous}
\label{prop:existence-filippov-solution}
The map \revision{$F[f^i+g^iu^i_{fb}]:\mathcal{X} \rightarrow 2^{\mathbb{R}^n}$} is upper semi-continuous and is non-empty, convex, and compact at each $x \in \mathcal{X}$.
For all $x_0 \in \mathcal{X}$, \cref{eq:filippov-operator-def} has a Caratheodory solution with $x(t_0) = x_0$, which is a Filippov solution to \revision{\cref{eq:n-affine-systems-feedback}}.

\end{proposition}

\subsection{Nonsmooth control barrier functions}
\label{subsec:nonsmooth-control-barrier-functions}

\phantom{}
\revision{For the rest of this section and \cref{sec:minimum-distance-between-strongly-convex-maps-smoothness-properties,sec:ncbfs-for-strictly-convex-sets}, we will restrict our discussion to a collision pair $(\mathcal{C}^i, \mathcal{C}^j)$ between Robots $i$ and $j$ for simplicity.}
We also define $\mathcal{X} := \mathcal{X}^i \times \mathcal{X}^j$, $\mathcal{U} := \mathcal{U}^i \times \mathcal{U}^j$, $x := (x^i,x^j) \in \mathcal{X}$, $h(x) := h^{ij}(x^i,x^j)$, $z := (z^i, z^j)$, and $u := (u^i,u^j) \in \mathcal{U}$. 

To enforce \revision{collision avoidance between the safe regions of} Robots $i$ and $j$, we want the minimum distance $h(x)$ (see \cref{eq:convex-set-min-dist}) between \revision{$\mathcal{C}^i(x^i)$ and $\mathcal{C}^j(x^j)$} to be greater than $0$.
We can define the set of safe states~\cite{ames2019control} as 
\begin{equation}
\label{eq:safe-set-def}
\mathcal{S} := \text{cl} \{ x: h(x) > 0 \}.
\end{equation}

\begin{remark}[Definition of the safe set]
We define $\mathcal{S}$ as the closure of the actual safe set to ensure that $\mathcal{S}$ is a closed set, but this may also introduce states that are not safe into $\mathcal{S}$.
Under certain regularity assumptions on the \revision{safe regions $\mathcal{C}^i$ and $\mathcal{C}^j$} of Robots $i$ and $j$, taking the closure of the safe set introduces only those unsafe states $x$ in which \revision{$\mathcal{C}^i(x^i) \cap \mathcal{C}^j(x^j)$ has measure zero}~\cite{rodriguez2012path}.
Thus, it is not detrimental to the problem of \revision{collision} avoidance.

\end{remark}

\revision{To achieve collision avoidance for the closed-loop system, we first define nonsmooth control barrier functions (adapted from ~\cite[Def.~4]{glotfelter2017nonsmooth}) and strong safety.}

\begin{definition}[Nonsmooth control barrier function \revision{and strong safety}]~\cite[Def.~4]{glotfelter2017nonsmooth}
\label{def:ncbf}
Let $h: \mathcal{X} \rightarrow \mathbb{R}$ be a locally Lipschitz continuous function and \revision{$\mathcal{S} = \text{cl} \, \{x \in \mathcal{X}: h(x) > 0\}$}.
Then, $h$ is a nonsmooth control barrier function (NCBF) if there exist \revision{measurable feedback control laws $u^i_{fb}: \mathcal{X} \rightarrow \mathcal{U}^i$ and $u^j_{fb}: \mathcal{X} \rightarrow \mathcal{U}^j$ such that for all $x_0$ with $h(x_0) > 0$, $x(t) \in \mathcal{S}$ for all $t \in [t_0, T]$ and for all Filippov solutions of the closed-loop system on $[t_0, T]$ with $x(t_0) = x_0$.
In this case, the closed-loop system is called strongly safe for the set $\mathcal{S}$.
}

\end{definition}

\revision{The term strong safety is used since the closed-loop system is safe for all Filippov solutions.}
If $h$ is a locally Lipschitz continuous function, it is also absolutely continuous~\cite{cortes2008discontinuous}.
By \cref{def:filippov-solution}, \revision{any} Filippov solution $x(t)$ is absolutely continuous, and so $(h\circ x)$ is also absolutely continuous and is differentiable almost everywhere~\cite{cortes2008discontinuous}.
Then, the following lemma \revision{(which can be proved using Gronwall Lemma~\cite[Lem.~C.3.1]{sontag2013mathematical})} can be used to guarantee safety:
\begin{lemma}[CBF constraint]~\cite[Lem.~2]{glotfelter2017nonsmooth}
\label{lem:ncbf-safety}
Let $\alpha > 0$, and $h: [t_0,T] \rightarrow \mathbb{R}$ be an absolutely continuous function.
If $h(t_0) > 0$ and
\begin{equation}
\label{eq:ncbf-constraint}
\dot{h}(t) \geq -\alpha\cdot h(t)
\end{equation}
for almost all $t \in [t_0,T]$, then $h(t) \geq h(t_0)e^{-\alpha t} > 0$ $\forall t \in [t_0,T]$.

\end{lemma}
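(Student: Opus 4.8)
The plan is to prove a Gronwall-type differential inequality result. The statement is: if $h:[0,T]\to\mathbb{R}$ is absolutely continuous, $\dot h(t) \geq -\alpha h(t)$ for almost all $t$, and $h(0) > 0$, then $h(t) \geq h(0)e^{-\alpha t} > 0$ for all $t \in [0,T]$. The natural approach is the classical integrating-factor trick adapted to the absolutely continuous (rather than $C^1$) setting.

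First I would define the auxiliary function $g(t) := h(t)e^{\alpha t}$. Since $h$ is absolutely continuous on $[0,T]$ and $t \mapsto e^{\alpha t}$ is $C^1$ (hence Lipschitz on the compact interval $[0,T]$), the product $g$ is absolutely continuous on $[0,T]$. Absolute continuity is exactly what licenses the fundamental theorem of calculus in the form $g(t) - g(0) = \int_0^t \dot g(\tau)\,d\tau$, using that $g$ is differentiable almost everywhere and its a.e.\ derivative is integrable. Next I would compute $\dot g(t)$ at any point of differentiability of both $h$ and the exponential (a.e.\ point): $\dot g(t) = \dot h(t)e^{\alpha t} + \alpha h(t)e^{\alpha t} = \big(\dot h(t) + \alpha h(t)\big)e^{\alpha t}$. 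By the hypothesis $\dot h(t) \geq -\alpha h(t)$, which holds for almost all $t$, we get $\dot g(t) \geq 0$ for almost all $t \in [0,T]$.

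Then I would integrate: for any $t \in [0,T]$, $g(t) - g(0) = \int_0^t \dot g(\tau)\,d\tau \geq 0$, so $g(t) \geq g(0) = h(0)$. Unwinding the definition gives $h(t)e^{\alpha t} \geq h(0)$, i.e.\ $h(t) \geq h(0)e^{-\alpha t}$. Finally, since $h(0) > 0$ and $e^{-\alpha t} > 0$ for all real $t$, we conclude $h(t) \geq h(0)e^{-\alpha t} > 0$ for all $t \in [0,T]$, which is the claim.

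The only genuine subtlety — the main obstacle — is the non-smoothness: $h$ is merely absolutely continuous, not continuously differentiable, so one cannot blindly invoke the ODE-comparison machinery that assumes classical derivatives everywhere. The careful points are (i) verifying that the product of an absolutely continuous function with a $C^1$ function is again absolutely continuous (so that FTC applies to $g$), and (ii) handling the ``almost all $t$'' qualifier correctly — the set where $\dot h(t) < -\alpha h(t)$ has measure zero, hence so does the set where $\dot g(t) < 0$, and integrating a function that is nonnegative a.e.\ still yields a nonnegative integral. Once these measure-theoretic bookkeeping items are in place, the integrating-factor computation itself is routine. This is precisely the standard proof of the scalar comparison lemma in the Carathéodory setting, so I would expect the author's proof to follow essentially these lines, possibly citing a Gronwall-type lemma or the Carathéodory-solution framework already referenced in the excerpt.
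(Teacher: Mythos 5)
Your proof is correct, and it is the standard integrating-factor argument for the scalar comparison lemma in the absolutely-continuous setting. Note, however, that the paper does not supply its own proof of this lemma: it is imported verbatim from \cite[Lem.~2]{glotfelter2017nonsmooth}, so there is no in-paper argument to compare against. Your write-up handles the two genuine technicalities correctly — that the product of an absolutely continuous function with a $C^1$ function on a compact interval is again absolutely continuous (so the fundamental theorem of calculus applies to $g(t)=h(t)e^{\alpha t}$), and that the a.e.\ inequality $\dot h(t)\geq -\alpha h(t)$ propagates to $\dot g(t)\geq 0$ a.e.\ and then to $\int_0^t \dot g(\tau)\,d\tau \geq 0$ — which is exactly what the cited source needs and uses.
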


The \revision{strong safety} for the \revision{closed-loop system \cref{eq:n-affine-systems-feedback}} can be enforced by choosing the minimum distance function $h$ as an NCBF and using the CBF constraint \cref{eq:ncbf-constraint}, but, as opposed to previous work on CBFs, the minimum distance $h$ is implicitly computed using an optimization problem.
In the following sections, we will show how to explicitly enforce \cref{eq:ncbf-constraint} for the minimum distance function $h$ for \revision{the collision pair $(\mathcal{C}^i, \mathcal{C}^j)$}.
\revision{
In \cref{sec:minimum-distance-between-strongly-convex-maps-smoothness-properties}, we show that the minimum distance function is locally Lipschitz continuous (\cref{lem:strict-convex-lipschitz-continuity}) and compute the derivatives of the KKT solution (\cref{lem:strict-convex-kkt-solution-c1,prop:strict-convex-kkt-solution-directional-derivative}).
Then, in \cref{sec:ncbfs-for-strictly-convex-sets}, we provide a method to quickly propagate the KKT solutions of the minimum distance problem (\cref{thm:distance-ode}) and propose a CBF-QP-based feedback control law (\cref{thm:cbf-qp}).
We show that the CBF-QP guarantees strong safety for the closed-loop system.
}
A flowchart of the assumptions and results in this paper is shown in \cref{fig:results-flowchart}, and the commonly used symbols are tabulated in \cref{tab:symbols-notations}.

\section{Minimum distance between strongly convex maps and smoothness properties}
\label{sec:minimum-distance-between-strongly-convex-maps-smoothness-properties}

\begin{table}[tbp]
\setlength\extrarowheight{1pt}
\footnotesize
\caption{List of symbols and notations}\label{tab:symbols-notations}
\begin{center}
\begin{tabular}{|l|l||l|l|} \hline
Symbol & Meaning & Symbol & Meaning \\ \hline
$x^i \in \mathcal{X}^i$ & System state of Robot $i$ & $\mathcal{C}^i(x^i)$ & \revision{Safe region of} Robot $i$ at $x^i$ \\
$u^i \in \mathcal{U}^i$ & Input for Robot $i$ & $A^i(x^i, \cdot)$ & Constraints defining $\mathcal{C}^i(x^i)$; \cref{eq:lipschitz-convex-set} \\
$f^i, g^i$ & Dynamics of Robot $i$; \cref{eq:n-affine-systems} & \multirow{2}{*}{$h(x)$} & Square of the minimum distance \\
$F[\cdot]$ & Filippov operator; \cref{eq:filippov-operator-def-limit} & & between $\mathcal{C}^i(x^i)$ and $\mathcal{C}^j(x^j)$; \cref{eq:strict-convex-min-dist} \\ 
$\mathcal{J}^i(x^i, z^i)$ & Active constraints at $z^i$; \cref{eq:strict-convex-index-set-j} & $\mathcal{S}$ & Set of safe states; \cref{eq:safe-set-def} \\
$\lambda^i$ & Dual variables for Robot $i$ & \multirow{3}{*}{$\mathcal{J}^i_k(x^i)$} & Active/strictly active/ \\
$L(x,z,\lambda)$ & Lagrangian function; \cref{eq:strict-convex-lagrangian} & & \revision{degenerate active} set \\
 & & & at $(z^*(x), \lambda^*(x))$; \cref{eq:strict-convex-kkt-active-set} \\
\hline
\end{tabular}
\end{center}
\end{table}

\revision{
The minimum distance function $h$ defines the set of safe states $\mathcal{S}$ and will be used as a nonsmooth control barrier function (NCBF) for collision avoidance (see \cref{subsec:nonsmooth-control-barrier-functions}).
The minimum distance between two sets is computed using an optimization problem.
To use $h$ as an NCBF, we must identify its smoothness properties and compute its derivatives.
In this section, we discuss the applicability of $h$ as an NCBF and compute the derivatives of the KKT solution.
}%

\subsection{Minimum distance problem and its KKT conditions}
\label{subsec:strict-convex-kkt-conditions}

The minimum distance function $h$ for the collision pair $(\mathcal{C}^i, \mathcal{C}^j)$ is given by (restated from \cref{eq:convex-set-min-dist})
\begin{equation}
\label{eq:strict-convex-min-dist}
h(x) = \min_{z} \ \ \{\lVert z^i - z^j \rVert_2^2: \ A^i(x^i, z^i) \leq 0, \ A^j(x^j, z^j) \leq 0\}.
\end{equation}
\revision{
The following result shows that, under \cref{assum:strongly-convex-pair}, the minimum distance problem \cref{eq:strict-convex-min-dist} has a unique optimal solution when $h(x) > 0$.
Further, the minimum distance function $h$ is locally Lipschitz continuous.

\begin{lemma}[Uniqueness of optimal solution and Lipschitz continuity of $h$]
\label{lem:strict-convex-lipschitz-continuity}
Let \cref{assum:strongly-convex-pair} hold and $h(x) > 0$ for some $x \in \mathcal{X}$.
Then, \cref{eq:strict-convex-min-dist} has a unique continuous optimal solution $z^*(x') = (z^{i*}(x'), z^{j*}(x'))$ for all $x'$ in a neighborhood $\mathcal{N}(x)$ of $x$.
Further, the minimum distance function $h$ is locally Lipschitz continuous.

\end{lemma}

\begin{proof}
The proof is provided in \cref{app:proof-strict-convex-lipschitz-continuity}.
\end{proof}
}

\begin{figure}
    \centering
    \label{fig:results-flowchart}
    \includegraphics[width=0.99\linewidth]{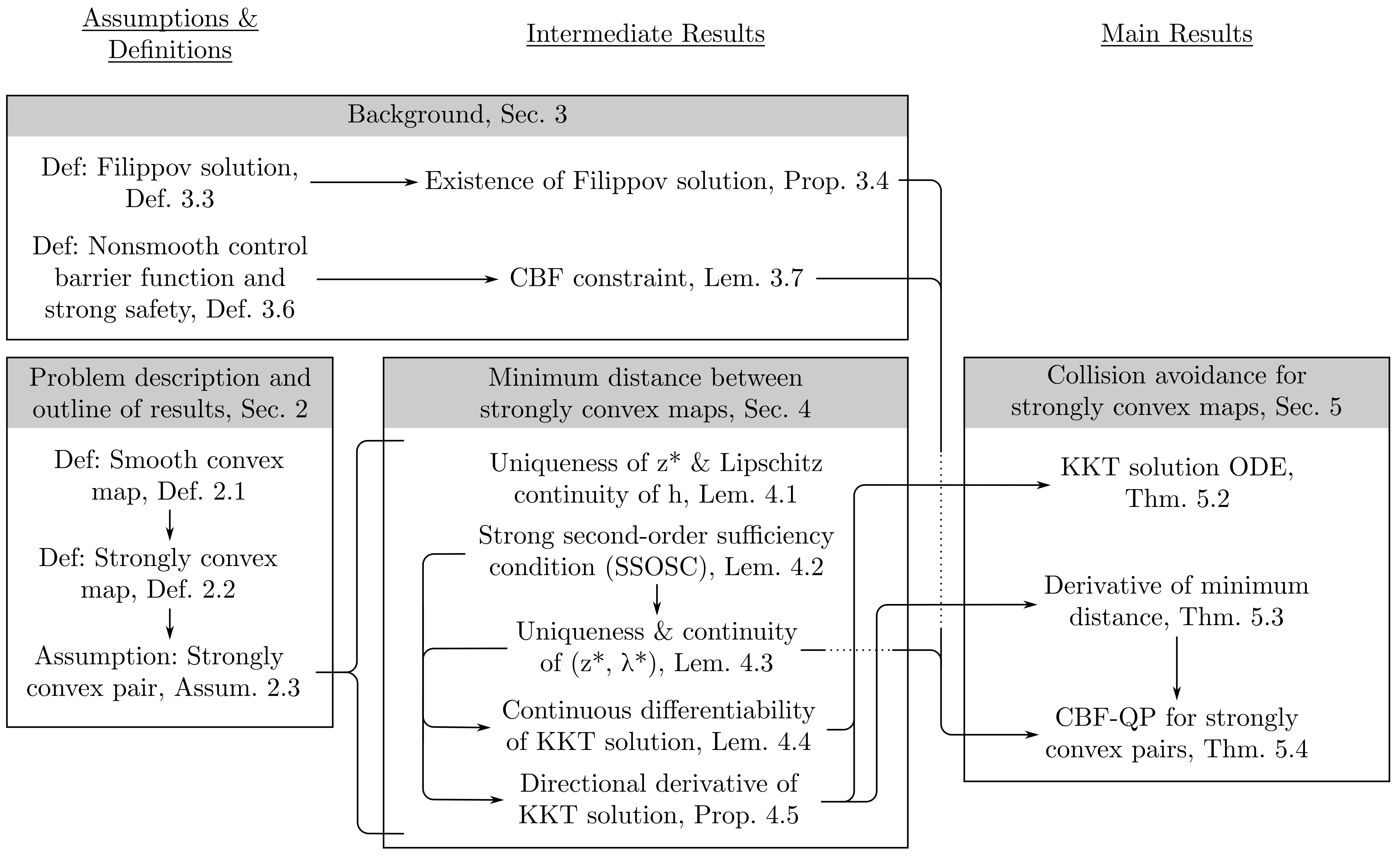}
    \caption{
    A flowchart of the important results in the paper;
    the results in \cref{sec:ncbfs-for-strictly-convex-sets} are the main contributions.
    \Cref{sec:problem-description-outline-main-results} defines smooth convex and strongly convex maps and states the problem considered in the paper.
    \Cref{sec:background} introduces Filippov solutions, nonsmooth control barrier functions (NCBFs), and strong safety for the closed-loop system.
    \Cref{sec:minimum-distance-between-strongly-convex-maps-smoothness-properties} identifies the smoothness properties of the KKT solution of the minimum distance problem.
    Finally, \cref{sec:ncbfs-for-strictly-convex-sets} states the CBF-QP and proves the strong safety property.
    }
\end{figure}

By \cref{lem:strict-convex-lipschitz-continuity}, $h$ is locally Lipschitz continuous and thus can be used as a candidate NCBF (see \cref{def:ncbf}).
To enforce the CBF constraint \cref{eq:ncbf-constraint}, we need to compute the derivative of $h$.
For this, we study the differentiability properties of the KKT solution of \cref{eq:strict-convex-min-dist}.

We can obtain the first-order necessary conditions that $z^*(x)$ must satisfy using KKT conditions.
For a given $x \in \mathcal{X}$, we define the Lagrangian function $L: \mathcal{X} \times \mathbb{R}^{2l} \times \mathbb{R}^{r^i+r^j} \rightarrow \mathbb{R}$ as,~\cite[Chap.~5]{boyd2004convex}
\begin{equation}
\label{eq:strict-convex-lagrangian}
L(x^i, x^j, z^i, z^j, \lambda^i, \lambda^j) := \lVert z^i - z^j \rVert_2^2 + (\lambda^i)^\top A^i(x^i, z^i) + (\lambda^j)^\top A^j(x^j, z^j),
\end{equation}
where $\lambda^i \in \mathbb{R}^{r^i}$ and $\lambda^j \in \mathbb{R}^{r^j}$ are the dual variables corresponding to the inequality constraints of \cref{eq:strict-convex-min-dist}.
We denote the dual variables as $\lambda := (\lambda^i, \lambda^j)$ and the Lagrangian function as $L(x, z, \lambda)$.
We also define $A(x,z) := [A^{i}(x^i,z^i)^\top, A^{j}(x^j,z^j)^\top]^\top$.

The Karush-Kuhn-Tucker (KKT) conditions are necessary optimality conditions for \cref{eq:strict-convex-min-dist}~\cite[Chap.~5]{boyd2004convex}.
The KKT conditions state that, for each $x \in \mathcal{X}$, there exists a KKT solution, $(z^*, \lambda^*)$, such that the following constraints are satisfied:
\begin{subequations}
\label{eq:strict-convex-kkt}
\begin{align}
\nabla_z L(x, z^{*}, \lambda^{*}) & = 0, \label{subeq:strict-convex-kkt-gradient} \\
(\lambda^{*})^\top A(x, z^*) & = 0, \label{subeq:strict-convex-kkt-slackness} \\
\lambda^{*} & \geq 0, \label{subeq:strict-convex-kkt-non-neg} \\
A(x, z^{*}) & \leq 0. \label{subeq:strict-convex-kkt-primal-feasibility}
\end{align}
\end{subequations}
Note that because of the non-negativity \cref{subeq:strict-convex-kkt-non-neg} and primal feasibility \cref{subeq:strict-convex-kkt-primal-feasibility} conditions, $\lambda^{i*}_k A^i_k(x^i, z^{i*}) = 0 \; \forall k \in [r^i]$ (and similarly for $j$).
Thus if the constraint $A^i_k(x^i, z^{i*})$ is inactive at $z^{i*}$, the corresponding dual variable $\lambda^{i*}_k = 0$.
However, both $A^i_k(x^i, z^{i*})$ and $\lambda^{i*}_k$ can be zero simultaneously.
\emph{Strict complementary slackness} condition holds at state $x$ for the KKT solution $(z^*, \lambda^*)$ if for all $k \in [r^i], \lambda^{i*}_k > 0$ whenever $A^i_k(x^i, z^{i*}) = 0$ (and similarly for $j$).

Since the optimization problem \cref{eq:strict-convex-min-dist} is convex and the interior of the feasible set is non-empty (by \cref{def:smooth-convex-set}), the KKT conditions \cref{eq:strict-convex-kkt} are necessary and sufficient conditions for global optimality~\cite[Chap.~5]{boyd2004convex} for each $x \in \mathcal{X}$.
Note that \revision{whenever $h(x) > 0$} there is a unique primal optimal solution for \cref{eq:strict-convex-min-dist} \revision{(by \cref{lem:strict-convex-lipschitz-continuity})}, and so all KKT solutions of \cref{eq:strict-convex-kkt} at $x$ share the same primal optimal solution $z^*(x)$.

\subsection{Smoothness properties of KKT solutions}
\label{subsec:strict-convex-smoothness-kkt-solutions}

To determine the uniqueness, continuity, and differentiability properties of the KKT solution, we use the \emph{strong second-order sufficiency condition} (SSOSC) for \cref{eq:strict-convex-min-dist}.
The SSOSC for \cref{eq:strict-convex-min-dist}~\cite[Eq.~(9)]{jittorntrum1984solution} states that for a KKT solution $(z^*, \lambda^*)$ at state $x$, $\exists a > 0$ such that
\begin{align}
\label{eq:strict-convex-ssosc}
z^\top \nabla_z^2 & L(x, z^*, \lambda^*) z \geq a \lVert z\rVert_2^2, \\\nonumber
\forall z \in \bigl\{z: \; & (z^{i})^\top \nabla_{z^i} A^i_k(x^i, z^{i*}) = 0, \ \forall k \in \mathcal{J}^i(x^i, z^{i*}), \\\nonumber
& (z^{j})^\top \nabla_{z^j} A^j_k(x^j, z^{j*}) = 0, \forall k \in \mathcal{J}^j(x^j, z^{j*}) \bigr\},
\end{align}
where the set of active indices $\mathcal{J}^i$ is defined in \cref{eq:strict-convex-index-set-j}.
SSOSC guarantees that the primal solution $z^*$ minimizes the optimization problem \cref{eq:strict-convex-min-dist} at $x$, but it can also be used to show smoothness properties.
First, we show that SSOSC holds for 
\cref{eq:strict-convex-min-dist}.

\begin{lemma}[Strong second-order sufficiency condition]
\label{lem:strict-convex-ssosc}
Let \revision{\cref{assum:strongly-convex-pair} hold and $x \in \mathcal{X}$ be such that $h(x) > 0$}.
Then, any KKT solution at $x$ satisfies the strong second-order sufficiency condition \cref{eq:strict-convex-ssosc}.

\end{lemma}

\begin{proof}
The proof is provided in \cref{app:proof-strict-convex-ssosc}.
\end{proof}

Now, using the LICQ condition (\cref{subdef:smooth-convex-set-licq}) and the strong second-order sufficiency condition, we can show that there is a unique dual solution for \cref{eq:strict-convex-kkt}.
Moreover, the dual optimal solution is also continuous, as shown next.

\begin{lemma}[Uniqueness and continuity of KKT solution]
\label{lem:strict-convex-unique-continuous-kkt-solution}
Let \revision{\cref{assum:strongly-convex-pair} hold and $x \in \mathcal{X}$ be such that $h(x) > 0$}.
Then there is a unique KKT solution \revision{$(z^*(x'), \lambda^*(x'))$ for \cref{eq:strict-convex-kkt}, for all $x'$ in a neighborhood $\mathcal{N}(x)$ of $x$}.
Moreover, the dual optimal solution \revision{$\lambda^*(\cdot)$ is continuous on $\mathcal{N}(x)$}.

\end{lemma}

\begin{proof}
\revision{
Since the assumptions of \cref{lem:strict-convex-lipschitz-continuity} are satisfied, there is a unique primal optimal solution $z^*(x')$ for all $x'$ in a neighborhood $\mathcal{N}(x)$ of $x$.
Also, let $\mathcal{N}(x)$ be such that $h(x') > 0 \ \forall x' \in \mathcal{N}(x)$.
}
\revision{For any $x' \in \mathcal{N}(x)$}, linear independence condition holds at the primal optimal solution $z^*(x')$ by \cref{subdef:smooth-convex-set-licq}.
\revision{Since $h(x') > 0$}, \cref{lem:strict-convex-ssosc} shows that SSOSC holds for all \revision{$x' \in \mathcal{N}(x)$}.
Finally, by \cref{assum:strongly-convex-pair}, $A^i$ and $A^j$ are twice continuously differentiable (see \cref{subdef:smooth-convex-set-c2}).
Then, \cite[Thm.~2]{jittorntrum1984solution} shows that there is a unique dual optimal solution \revision{$\lambda^*(x')$ for all $x' \in \mathcal{N}(x)$} and that the dual optimal solution $\lambda^*(\cdot)$ is continuous \revision{on $\mathcal{N}(x)$}.
\end{proof}

Note that, for a given state $x$, the KKT conditions \cref{subeq:strict-convex-kkt-gradient,subeq:strict-convex-kkt-slackness} are $2l + r^i + r^j$ equality constraints for the KKT solution $(z^*(x), \lambda^*(x)) \in \mathbb{R}^{2l + r^i + r^j}$.
Suppose the Jacobian of the KKT equality constraints with respect to the KKT solution is invertible.
In that case, we can use the implicit function theorem to guarantee continuous differentiability of the KKT solution $(z^*(\cdot), \lambda^*(\cdot))$.
However, we also need to ensure that the inequalities \cref{subeq:strict-convex-kkt-non-neg,subeq:strict-convex-kkt-primal-feasibility} are satisfied in the vicinity of $x$.
Next, we show that the KKT solution is continuously differentiable whenever strict complementary slackness holds.

\begin{lemma}[Continuous differentiability of KKT solution]
\label{lem:strict-convex-kkt-solution-c1}
Let \revision{\cref{assum:strongly-convex-pair} hold, $x \in \mathcal{X}$ be such that $h(x) > 0$}, and the KKT solution $(z^*(x), \lambda^*(x))$ satisfy the strict complementary slackness condition, \revision{i.e., $\lambda^{i*}_k(x)$ and $A^i_k(x^i, z^{i*}(x))$ are not simultaneously zero for any $k \in [r^i]$ (and likewise for $j$)}.
Then for $x'$ in a neighbourhood of $x$, the unique KKT solution $(z^*(x'), \lambda^*(x'))$ is continuously differentiable.

\end{lemma}

\begin{proof}
\revision{
Since the assumptions of \cref{lem:strict-convex-unique-continuous-kkt-solution} are satisfied, there is a unique KKT solution $(z^*(x'), \lambda^*(x'))$ for all $x'$ in a neighborhood $\mathcal{N}(x)$ of $x$.
Also, let $\mathcal{N}(x)$ be such that $h(x') > 0 \ \forall x' \in \mathcal{N}(x)$.
}%
To show the continuous differentiability property for the KKT solution, we make use of~\cite[Thm.~1]{jittorntrum1984solution}.
The assumptions in~\cite[Thm.~1]{jittorntrum1984solution} are satisfied at $x$ by \revision{the LICQ condition} (\cref{subdef:smooth-convex-set-licq}), \revision{the SSOSC property} (\cref{lem:strict-convex-ssosc}), and \revision{the twice continuous differentiability of $A^i$ and $A^j$} (\cref{subdef:smooth-convex-set-c2}). 
Note that the second-order sufficiency condition (SOSC) is weaker than SSOSC.
Thus, using result (b) from~\cite[Thm.~1]{jittorntrum1984solution}, for all $x'$ in a neighbourhood of $x$ \revision{(different from $\mathcal{N}(x)$)} the KKT solution $(z^*(x'), \lambda^*(x'))$ is continuously differentiable.
\end{proof}

The derivatives of the KKT solution $(z^*(\cdot), \lambda^*(\cdot))$ at $x$ can be computed by differentiating the KKT conditions \cref{eq:strict-convex-kkt} as in ~\cite{jittorntrum1984solution} to obtain,
\begin{subequations}
\label{eq:strict-convex-kkt-solution-derivative}
\begin{gather}
    Q(x) D_x (z^*, \lambda^*)(x) = V(x), \\
    Q(x) := \begin{bmatrix}\nabla_z^2 L & D_z A^\top \\ \text{diag}(\lambda^{*}) D_z A & \text{diag}(A)\end{bmatrix}, \quad V(x) := \begin{bmatrix}-D_x \nabla_z L \\ -\text{diag}(\lambda^{*})D_x A \end{bmatrix}, \label{eq:strict-convex-kkt-solution-derivative-lhs-rhs}
\end{gather}
\end{subequations}
where $Q(x)$ and $V(x)$ are evaluated at $(x, z^*(x), \lambda^*(x))$.
$Q(x)$ is invertible when strict complementary slackness holds \revision{and $h(x) > 0$}.
Thus, we can compute the derivative of $h(x) = \lVert z^{i*}(x)-z^{j*}(x)\rVert^2_2$ as
\begin{equation}
\label{eq:strict-convex-min-dist-derivative-kkt}
\revision{D_x h(x) = 2(z^{i*}(x)-z^{j*}(x))^\top (D_x z^{i*}(x) - D_x z^{j*}(x))}.
\end{equation}

However, when strict complementary slackness does not hold at $x$, $Q(x)$ is not invertible.
For these border cases, we can still obtain the directional derivative of $h$ as the solution of a \revision{linear complementarity problem (LCP)}.
For the optimization problem \cref{eq:strict-convex-min-dist}, we first define the \emph{active set} of constraints $\mathcal{J}_0^i$, the \emph{strictly active set} of constraints $\mathcal{J}_1^i$, \revision{and the \emph{degenerate active set} of constraints $\mathcal{J}_2^i$} for $\mathcal{C}^i$ (and likewise for $\mathcal{C}^j$) at a KKT solution $(z^*(x), \lambda^*(x))$ as
\begin{subequations}
\label{eq:strict-convex-kkt-active-set}%
\begin{align}
\mathcal{J}^i_0(x) & := \{k\in [r^i]: A^i_k(x^i, z^{i*}(x)) = 0\}, & \text{\revision{(active set)}} \\
\mathcal{J}^i_1(x) & := \{k \in [r^i]: \lambda^{i*}_k(x) > 0\}, & \text{\revision{(strictly active set)}} \\
\mathcal{J}^i_2(x) & := \mathcal{J}^i_0(x) \setminus \mathcal{J}^i_1(x). & \text{\revision{(degenerate active set)}} 
\end{align}
\end{subequations}
We also adopt the following notation: If $A^i$ has $r^i$ rows and $A^j$ has $r^j$ rows, then the index for $A = [(A^{i})^\top, (A^{j})^\top]^\top$ is obtained from the set $[r] := [r^i] \sqcup [r^j]$, where $\sqcup$ denotes the disjoint union.
Then, for $(i,k) \in [r]$, $A_{(i,k)} := A^i_k$ and for $(j,k) \in [r]$, $A_{(j,k)} := A^j_k$.
Similarly, we define the index set $\mathcal{J}_0(x) := \mathcal{J}^i_0(x) \sqcup \mathcal{J}^j_0(x)$ (and likewise for $\mathcal{J}_1(x)$ and $\mathcal{J}_2(x)$).
\Cref{fig:strict-convex-illustration} shows an example with active, strictly active, and degenerate active constraints and the index set computation.

\begin{figure}
    \centering
    \label{fig:strict-convex-illustration}
    \includegraphics[width=0.60\linewidth]{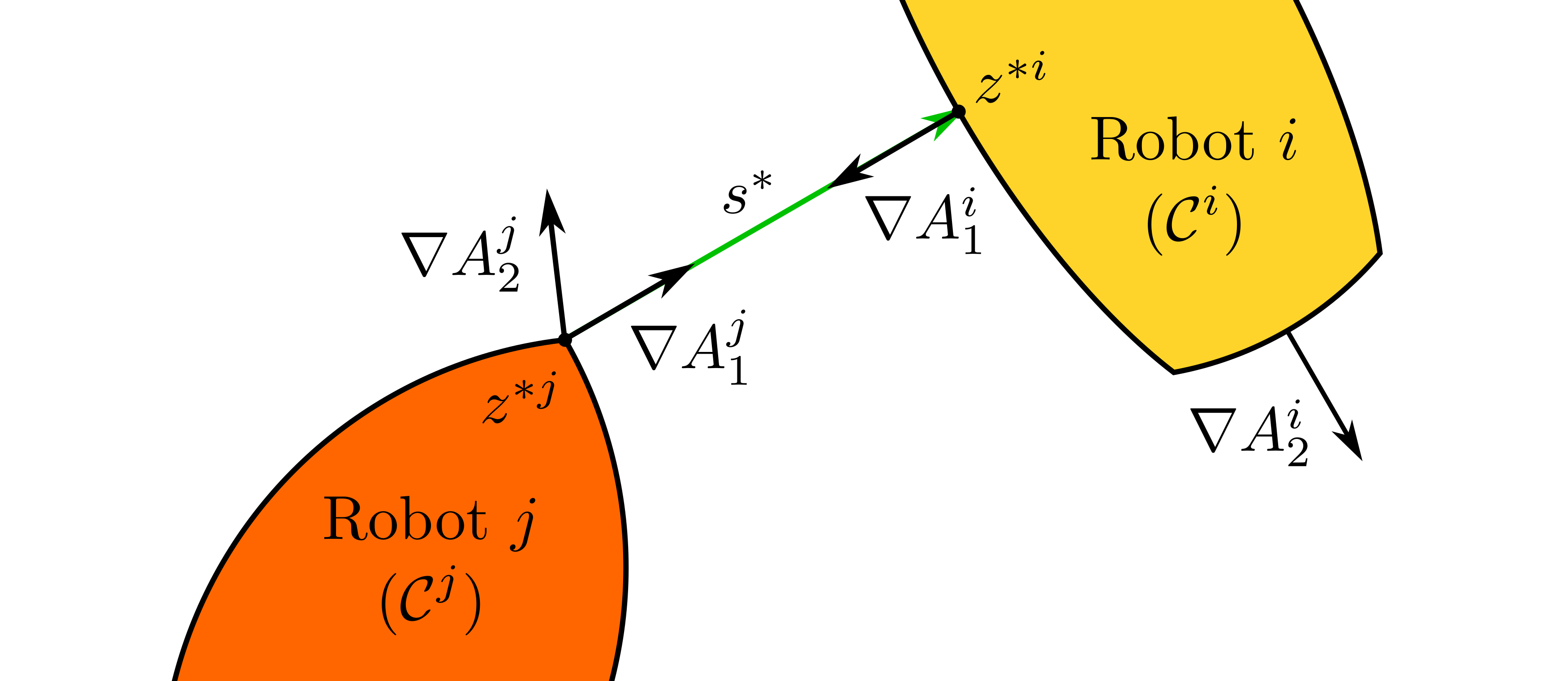}
    \caption{
    Minimum distance problem with its corresponding separating vector and sets of constraints.
    For a given state $x$, the figure illustrates the separating vector $s^*=z^{i*}-z^{j*}$ in green.
    \revision{Under \cref{assum:strongly-convex-pair} and when $h(x) > 0$}, there is a unique optimal solution $z^* = (z^{i*}, z^{j*})$, and the gradients of the constraints, $\nabla_{z^i} A^i_1$ at $z^{i*}$ and $\nabla_{z^j} A^j_1$ and $\nabla_{z^j} A^j_2$ at $z^{j*}$ are shown.
    The KKT condition \cref{subeq:strict-convex-kkt-gradient} indicates that a conic combination of $\nabla_{z^j} A^j_k$ must be equal to $s^*=z^{i*}-z^{j*}$ (the dual variables are the coefficients).
    From the figure, we can see that $s^*$ lies in the cone generated by $\nabla_{z^j} A^j_1$ and $\nabla_{z^j} A^j_2$ (and similarly for $i$), and that $\lambda^{*j}_2 = \lambda^{*i}_2 = 0$.
    Thus the index sets (active set $\mathcal{J}_0$ and the strictly active set $\mathcal{J}_1$) at the state $x$ are $\mathcal{J}^i_0 = \mathcal{J}^i_1 = \{1\}, \mathcal{J}^j_0 = \{1,2\}$, and $\mathcal{J}^j_1 = \{1\}$.
    The combined index sets can be written as: $\mathcal{J}_0 = \{(i,1), (j,1), (j,2)\}$, $\mathcal{J}_1 = \{(i,1), (j,1)\}$, and $\mathcal{J}_2 = \{(j,2)\}$.
    }
\end{figure}

By the complementary slackness condition \cref{subeq:strict-convex-kkt-slackness}, $\mathcal{J}^i_1(x) \subset \mathcal{J}^i_0(x)$, with the equality holding only when strict complementary slackness condition holds.
When strict complementary slackness does not hold, we can use the following result to compute the directional derivatives of the KKT solution, obtained as an extension of \cref{lem:strict-convex-kkt-solution-c1}.
\begin{proposition}[Directional derivative of KKT solution]~\cite[Thm.~4]{jittorntrum1984solution}
\label{prop:strict-convex-kkt-solution-directional-derivative}
Let \revision{\cref{assum:strongly-convex-pair} hold, $x \in \mathcal{X}$ be such that $h(x) > 0$}, and $\mathring{x} \in \mathbb{R}^{2n}$ be a direction of perturbation from $x$.
Consider the following set of constraints for $(\mathring{z}, \mathring{\lambda}) \revision{\in \mathbb{R}^{2l+r^i+r^j}}$,
\begin{subequations}
\label{eq:strict-convex-kkt-right-derivative}%
\begin{alignat}{2}
& \nabla_z^2L \mathring{z} + D_z A^\top \mathring{\lambda} = -D_x (\nabla_z L)[\mathring{x}], \span\span \label{subeq:strict-convex-kkt-right-derivative-gradient} \\
& \nabla_z A_k^\top \mathring{z} = -\nabla_x A_k^\top \mathring{x}, && k \in \mathcal{J}_1(x), \label{subeq:strict-convex-kkt-right-derivative-feasibility} \\
& \nabla_z A_k^\top \mathring{z} \leq -\nabla_x A_k^\top \mathring{x}, && k \in \mathcal{J}_2(x), \\
& \mathring{\lambda}_k = 0, \ \ k \in \mathcal{J}_0(x)^c, && \mathring{\lambda}_k \geq 0, \ \ k \in \mathcal{J}_2(x), \label{subeq:strict-convex-kkt-right-derivative-non-neg}\\
& \mathring{\lambda}_k(\nabla_z A_k^\top \mathring{z} + \nabla_x A_k^\top \mathring{x}) = 0, \quad\quad && k \in \mathcal{J}_2(x), \label{subeq:strict-convex-kkt-right-derivative-slackness}%
\end{alignat}%
\end{subequations}%
where $(\cdot)^c$ represents the complement of a set.
The system of equalities and inequalities \cref{eq:strict-convex-kkt-right-derivative} is evaluated at $(x, z^*(x), \lambda^*(x))$.

Then, \cref{eq:strict-convex-kkt-right-derivative} has a unique solution, $(\mathring{z}^*, \mathring{\lambda}^*)$, which is the directional derivative of $(z^*(\cdot), \lambda^*(\cdot))$ at $x$ along $\mathring{x}$.

\end{proposition}

\begin{proof}
\revision{
Since the assumptions of \cref{lem:strict-convex-unique-continuous-kkt-solution,lem:strict-convex-ssosc} are satisfied, there is a unique KKT solution $(z^*(x), \lambda^*(x))$ at $x$ and SSOSC is satisfied at $x$.
}
The proof follows from~\cite[Thm.~4]{jittorntrum1984solution}, which has the requirement that the strong second-order sufficient condition (SSOSC) \cref{eq:strict-convex-ssosc} be satisfied for the unique KKT solution.
\end{proof}

\Cref{prop:strict-convex-kkt-solution-directional-derivative} provides a method to calculate the directional derivative of the KKT solution along a direction $\mathring{x}$ by solving the LCP \cref{eq:strict-convex-kkt-right-derivative}.
\revision{
In \cref{sec:ncbfs-for-strictly-convex-sets}, we will propose modifications of \cref{eq:strict-convex-kkt-solution-derivative} and \cref{eq:strict-convex-kkt-right-derivative}, which will allow us to compute KKT solutions quickly.
}%
\revision{We additionally note that when strict complementary slackness holds, i.e., when the degenerate active set $\mathcal{J}_2(x) = \emptyset$ (see \cref{eq:strict-convex-kkt-active-set})}, \cref{eq:strict-convex-kkt-right-derivative} reduces to \cref{eq:strict-convex-kkt-solution-derivative}.

\revision{To summarize the results in this section, \cref{lem:strict-convex-lipschitz-continuity} shows that the minimum distance function $h$ is locally Lipschitz continuous and \cref{lem:strict-convex-unique-continuous-kkt-solution} shows that the KKT solution is unique and continuous.}
\Cref{lem:strict-convex-kkt-solution-c1,prop:strict-convex-kkt-solution-directional-derivative} provide methods to compute the directional derivative $(\mathring{z}, \mathring{\lambda})$ of the KKT solution $(z^*(\cdot), \lambda^*(\cdot))$ along $\mathring{x}$, when the \revision{degenerate active set} $\mathcal{J}_2(x)$ is empty and non-empty respectively.
In the next section, we use these results to show that the minimum distance between a strongly convex pair can be quickly computed using an ODE.
Further, we can compute the derivative of the minimum distance and use it in a CBF-QP to guarantee \revision{strong safety (see \cref{def:ncbf})}.
An outline of the results in this section and the following section is provided in the flowchart \cref{fig:results-flowchart}.

\section{Main Result: Collision avoidance for strongly convex maps}
\label{sec:ncbfs-for-strictly-convex-sets}

This section shows three main results: 
First, \cref{thm:distance-ode} provides a \revision{numerical method to quickly propagate the KKT solutions of the minimum distance problem along a state trajectory; 
}%
second, \cref{thm:minimum-distance-derivative} computes the derivative of the minimum distance using the KKT solution; 
and third, \cref{thm:cbf-qp} provides a CBF-QP formulation that guarantees \revision{strong safety (see \cref{def:ncbf})} between the strongly convex pair.

We start with the following result, which summarizes \cref{lem:strict-convex-kkt-solution-c1,prop:strict-convex-kkt-solution-directional-derivative} and computes the directional derivative of the KKT solution $(z^*(x), \lambda^*(x))$.
The directional derivatives can then be used \revision{to integrate the KKT solution numerically, given the state derivative $\dot{x}(t)$}.
Additional stabilizing terms are added to ensure the numerical KKT solution locally stabilizes to the actual KKT solution.

\begin{theorem}[KKT solution ODE]%
\label{thm:distance-ode}%
Let \revision{\cref{assum:strongly-convex-pair} hold and $x(t) \in \mathcal{X}$, $t \geq t_0$}, be a differentiable state trajectory such that $\revision{h(x(t)) > 0} \; \forall t \geq t_0$.
Then, the KKT solutions $(z^*(t), \lambda^*(t)) := (z^*(x(t)), \lambda^*(x(t)))$, $t \geq t_0$, can be obtained 
as a solution of the following ODE: 
At time $t = t_0$, let $(z^*(t_0), \lambda^*(t_0))$ be initialized by solving the optimization problem \cref{eq:strict-convex-min-dist}.
At time $t$, $(\dot{z}^*(t), \dot{\lambda}^*(t))$ can be computed, depending on \revision{the degenerate active set of constraints $\mathcal{J}_2(x(t))$ (see \cref{eq:strict-convex-kkt-active-set})}, as follows:

\begin{enumerate}[labelindent=0pt, itemindent=!, leftmargin=*]
\item When $\mathcal{J}_2(x(t)) = \emptyset$,
\begin{equation}
\label{eq:distance-ode-empty-j2}
    \begin{bmatrix}\dot{z}^*(t) \\ \dot{\lambda}^*(t)\end{bmatrix} = Q(x(t))^{-1} \bigl(V(x(t)) \dot{x}(t) - \kappa e_{kkt}(x(t))\bigr),
\end{equation}
where $Q$ and $V$ are defined in \cref{eq:strict-convex-kkt-solution-derivative-lhs-rhs}, $\kappa > 0$ is a stabilizing constant, and
\begin{equation}
\label{eq:distance-ode-e-kkt}
    e_{kkt}(x) := \begin{bmatrix}
        \nabla_z L(x, z^*(x), \lambda^*(x)) \\
        \text{diag}(\lambda^*(x)) A(x, z^*(x))
    \end{bmatrix}.
\end{equation}
The term $e_{kkt}(x(t))$ is the residual error in the KKT conditions \cref{eq:strict-convex-kkt} at time $t$.

\item \label{thm:distance-ode-subcase-nonempty-j2}
When $\mathcal{J}_2(x(t)) \neq \emptyset$, $\dot{z}^*(t)$ is the unique optimal solution of the following QP.
\begin{subequations}
\label{eq:distance-ode-j2}%
\begin{alignat}{3}
   \min_{\mathring{z}} \quad & (1/2)\mathring{z}^\top Q_{11}(x(t)) \mathring{z} + [V_1(x(t))\dot{x}(t) + \tilde{e}_{1, kkt}(x(t))]^\top \mathring{z} \span\\
    \text{s.t.} \quad & P_{eq}(x(t)) \mathring{z} = q_{eq}(x(t))\dot{x}(t) + \tilde{e}_{2, kkt}(x(t)), \label{subeq:distance-ode-j2-eq} \\
    & P_{in}(x(t)) \mathring{z} \leq q_{in}(x(t)) \dot{x}(t), \label{subeq:distance-ode-j2-ineq}
\end{alignat}%
\end{subequations}%
where
\begin{equation}
\label{eq:distance-ode-j2-consts}
\begin{gathered}
    Q_{11}(x) := \nabla_z^2 L, \quad V_{1}(x) := D_x \nabla_z L, \quad \tilde{e}_{1, kkt}(x) := \kappa \nabla_z L - \kappa D_z A_{\mathcal{J}_{0c}}^\top \lambda^*_{\mathcal{J}_{0c}}, \\
    P_{eq}(x) := D_z A_{\mathcal{J}_1}, \quad q_{eq}(x) := -D_x A_{\mathcal{J}_1}, \quad \tilde{e}_{2, kkt}(x) := -\kappa A_{\mathcal{J}_1}, \\
    \quad P_{in}(x) := D_z A_{\mathcal{J}_2}, \quad q_{in}(x) := -D_x A_{\mathcal{J}_2},
\end{gathered}
\end{equation}
\revision{where $\mathcal{J}_1$ is the strictly active set of constraints and $\mathcal{J}_{0c}$ (the inactive set of constraints) denotes the complement of the active set $\mathcal{J}_0$ (see \cref{eq:strict-convex-kkt-active-set})}.
All variables \revision{and the sets $\mathcal{J}_1, \mathcal{J}_2$, and $\mathcal{J}_{0c}$} in \cref{eq:distance-ode-j2-consts} are evaluated at $(x, z^*(x), \lambda^*(x))$.
The dual derivative $\dot{\lambda}^*(t)$ is given by,
\begin{equation}
\label{eq:distance-ode-j2-dual}
\dot{\lambda}^*_{\mathcal{J}_1}(t) = \lambda^*_{eq}, \quad \dot{\lambda}^*_{\mathcal{J}_2}(t) = \lambda^*_{in}, \quad \dot{\lambda}^*_{\mathcal{J}_{0c}}(t) = -\kappa \lambda^*_{\mathcal{J}_{0c}}(t),
\end{equation}
where $\lambda^*_{eq}$ and the $\lambda^*_{in}$ are the optimal dual solutions corresponding to \cref{subeq:distance-ode-j2-eq} and \cref{subeq:distance-ode-j2-ineq} respectively.
\revision{Note that $\mathcal{J}_1 \cup \mathcal{J}_{2} \cup \mathcal{J}_{0c} = (\{i\} \times [r^i]) \cup (\{j\} \times [r^j])$ at any state\footnote{\label{footnote:ri-rj}
Recall that $r^i$ is the number of constraints defining $\mathcal{C}^i(x^i)$, \cref{eq:lipschitz-convex-set}.
}; in other words, \cref{eq:distance-ode-j2-dual} fully defines $\dot{\lambda}^*(t)$}.
\end{enumerate}

\end{theorem}

\begin{proof}
The proof is provided in \cref{app:proof-distance-ode}.
\end{proof}

\Cref{thm:distance-ode} shows that the derivative of the KKT solution can be computed as the solution to a system of linear equations \cref{eq:distance-ode-empty-j2} when \revision{the degenerate active set} $\mathcal{J}_2(x(t)) = \emptyset$. 
\revision{When $\mathcal{J}_2(x(t)) \neq \emptyset$, the QP \cref{eq:distance-ode-j2} can be solved as an LCP in the variable $\dot{\lambda}^*_{\mathcal{J}_2}$ (this can be done by writing the KKT conditions of the QP \cref{eq:distance-ode-j2} and eliminating the variables $\mathring{z}, \dot{\lambda}^*_{\mathcal{J}_1}$, and $\dot{\lambda}^*_{\mathcal{J}_{0_c}}$; see the proof of \cref{thm:distance-ode}).}
%
%
Thus, \cref{thm:distance-ode} provides a way to integrate the time-varying KKT solution numerically instead of solving the distance optimization at each time.

\revision{
\begin{remark}[KKT solution propagation]
The state trajectory $x(t)$ of the closed-loop system \cref{eq:n-affine-systems-feedback} is obtained as a Filippov solution (see \cref{def:filippov-solution}).
So, $x(t)$ is an absolutely continuous function of $t$ and is differentiable almost everywhere.
The KKT solution ODE in \cref{thm:distance-ode} assumes that the state trajectory $x(t)$ is differentiable, which is not guaranteed for the closed-loop system.
Nevertheless, \cref{thm:distance-ode} is useful in discrete-time control applications to compute the KKT solution at the next time step given the current KKT solution and the control input (see \cref{sec:results}).
\end{remark}
}

To impose a CBF constraint on the minimum distance, we need to compute the derivative of $h$ with respect to the state $x$.
The following result provides a method to compute the derivative of the minimum distance.

\begin{theorem}[Derivative of minimum distance]%
\label{thm:minimum-distance-derivative}%
\revision{Let \cref{assum:strongly-convex-pair} hold and $h(x) > 0$ at $x \in \mathcal{X}$.
Then, $h$ is continuously differentiable in a neighborhood $\mathcal{N}(x)$ of $x$ and}
\begin{equation}
\label{eq:minimum-distance-derivative}
    D_x h(x) = \lambda^*(x)^\top D_x A(x, z^*(x)).
\end{equation}
Also, $D_x h$ is directionally differentiable \revision{at $x$}.

Further, if the functions $A^i$ and $A^j$ defining $\mathcal{C}^i$ and $\mathcal{C}^j$ respectively are smooth and $r^i = r^j = 1$ \footref{footnote:ri-rj}, then the minimum distance $h$ between $\mathcal{C}^i$ and $\mathcal{C}^j$ is \revision{a smooth function on $\mathcal{N}(x)$}.

\end{theorem}

\begin{proof}
The proof is provided in \cref{app:proof-minimum-distance-derivative}.
\end{proof}

In \cref{thm:minimum-distance-derivative}, the requirement that $r^i = 1$, i.e., that the number of constraints defining $\mathcal{C}^i(x^i)$ be one, is not too restrictive.
When multiple constraints define the set $\mathcal{C}^i(x^i)$, the softmax function can be used to tightly overapproximate $\mathcal{C}^i(x^i)$ using a single function.

Finally, we can use \cref{thm:minimum-distance-derivative} to enforce the CBF constraints \cref{eq:ncbf-constraint} and formulate a CBF-QP that guarantees \revision{strong safety (see \cref{def:ncbf})} between a strongly convex pair.

\begin{theorem}[CBF-QP for strongly convex pairs]%
\label{thm:cbf-qp}%
Let \revision{\cref{assum:strongly-convex-pair} hold, $x_0 \in \mathcal{X}$ be such that $h(x_0) > 0$}, $u^{nom}: \mathcal{X} \rightarrow \mathcal{U}$ be any nominal controller that does not guarantee the safety of the system, and $\Phi \succeq 0$.
Consider the following CBF-QP, which is used to \revision{compute a feedback control law $u^*_{fb}$} for the dynamical system.
\begin{subequations}
\label{eq:cbf-qp}%
\begin{align}
   u^*_{fb}(x) \in \ \argmin_{u \in \mathcal{U}} \quad & \lVert u - u^{nom}(x)\rVert_\Phi^2 \\
    \text{s.t.} \quad & \lambda^*(x)^\top D_x A(x, z^*(x)) (f(x) + g(x)u) \geq -\alpha h(x) \label{subeq:cbf-qp-cbf},%
\end{align}%
\end{subequations}%
where $\alpha > 0$ is the CBF constant (see \cref{lem:ncbf-safety}).

Assume that \cref{eq:cbf-qp} is feasible for all $x$ \revision{such that $h(x) > 0$}, for some $\alpha > 0$.
Then, any measurable feedback control law $u^*_{fb}$ obtained as a solution of \cref{eq:cbf-qp} makes the closed-loop system \revision{strongly safe, i.e., $h(x(t)) > 0$ for all $t \in [t_0, T]$ and Filippov solutions of the closed-loop system on $[t_0, T]$ with $T > t_0$ and $x(t_0) = x_0$}.
The \revision{closed-loop system's strong safety property} is independent of the cost function used in \cref{eq:cbf-qp}.

\end{theorem}

\begin{proof}
The proof is provided in \cref{app:proof-cbf-qp}.
\end{proof}

Note that in the CBF-QP \cref{eq:cbf-qp}, \cref{subeq:cbf-qp-cbf} is the CBF constraint 
\cref{eq:ncbf-constraint}.

\begin{remark}[Nonsmooth CBFs and safety via feasibility]
Many existing works in the CBF literature assume Lipschitz continuity of the feedback controller $u^*_{fb}$ to guarantee the uniqueness and continuous differentiability of the closed-loop state trajectory $x$~\cite[Thm.~2]{ames2019control}.
However, Lipschitz continuity of the optimal solution of a parametric QP can be challenging to prove or guarantee in practice~\cite{morris2015continuity}.
Further, such results require the CBF-QP to be solved to optimality.
This is undesirable because, ideally, we want the system's safety to depend on the feasibility of the CBF-QP and not the optimality.

In this paper, we forego the uniqueness property of the closed-loop state trajectory to guarantee safety for any feasible solution to the CBF-QP rather than just for the optimal solution.
This way, we can use any continuous cost function in \cref{eq:cbf-qp} \revision{(even a non-convex one)} and still guarantee safety for any feasible solution.
However, the discontinuous input can result in chatter in the closed-loop trajectory~\cite{morris2015continuity}.

\end{remark}

\revision{
\begin{remark}[Strongly convex map algebra]
\label{rem:strongly-convex-map-algebra}
Our collision avoidance formulation based on smooth convex and strongly convex maps allows for many convex set algebraic manipulations.
To see this, we first consider an extension of the minimum distance problem \cref{eq:convex-set-min-dist} as
\begin{equation}
\label{eq:convex-set-min-dist-extension}
h(x) = \ \min_{z} \ \ \bigl\{\lVert P^i z^i - P^j z^j \rVert_M^2: \ z^i \in \mathcal{C}^i(x^i), \ z^j \in \mathcal{C}^j(x^j)\bigr\},
\end{equation}
where $P^i$, $P^j$ are projection matrices with full row-rank and $M \succ 0$ is the inner product metric.
Under \cref{assum:strongly-convex-pair}, all of the results in this paper apply to \cref{eq:convex-set-min-dist-extension} as well.
The Cartesian product and intersection of two smooth convex maps $\mathcal{C}^{i1}$ and $\mathcal{C}^{i2}$ can be represented using the functions $(A^{i1}(x^i, z^{i1}), A^{i2}(x^i, z^{i2}))$ and $(A^{i1}(x^i, z^{i}), A^{i2}(x^i, z^{i}))$ respectively.
Thus, our formulation implicitly supports projections, Cartesian products, Minkowski sums (using Cartesian products with projections), and intersections.
The convex hull corresponding to two smooth convex maps can be explicitly formulated using perspective functions~\cite[Ex.~4.56]{boyd2004convex}.
For intersections and convex hulls, \cref{subdef:smooth-convex-set-licq,subdef:smooth-convex-set-slater} must be explicitly checked.
\end{remark}
}

\begin{remark}[CBF-QP for multiple collision pairs]
The CBF-QP formulation \cref{eq:cbf-qp} is shown for a single strongly convex pair.
When there are multiple robots and obstacles (and thus multiple strongly convex pairs), the CBF constraint \cref{subeq:cbf-qp-cbf} is used for each pair.
Note that only one CBF constraint is needed per collision pair.
\revision{
Further, the strong safety property of the CBF-QP \cref{eq:cbf-qp} holds even for multiple collision pairs.
This is because the proof of \cref{thm:cbf-qp} uses the upper semi-continuity property of the CBF-QP feasible set, which remains valid when multiple CBF constraints are used~\cite[Ex.~5.8]{rockafellar2009variational}.
}
\end{remark}

\begin{remark}[Compatibility with other CBF methods]
Although the CBF-QP formulation \cref{eq:cbf-qp}, as presented, is restricted to centralized systems with no uncertainty and for which the distance function is first-order with respect to the inputs (i.e., relative degree $1$ systems), these are not restrictions on our method.
Existing methods for decentralized and distributed CBFs, such as \cite{wang2017safety}, are compatible with our method.
\revision{CBFs for uncertain systems and systems with disturbance can also be rephrased as CBFs for convex sets, where a convex set represents the uncertainty bound~\cite{cohen2022robust} (see \cref{subsec:example-kkt-solution-ode-verification})}.
Using the backup CBF method \cite{chen2021backup} or by system-specific heuristics, our method can be applied to higher-order systems (see \cref{subsec:example-cbf-obstacle avoidance}).
\revision{Additionally, when a single constraint defines the safe region maps $\mathcal{C}^i$ and $\mathcal{C}^j$, i.e., when $r^i = r^j = 1$, \cref{thm:minimum-distance-derivative} shows that the minimum distance function $h$ is smooth; thus, higher-order CBF methods~\cite{xiao2022higher} can be directly applied.}
\end{remark}

\section{Results}
\label{sec:results}

In this section, we provide simulation results to demonstrate the validity of the three main theorems in the previous section.
In the first example, we show \revision{that \cref{thm:distance-ode} can be used to significantly speed up the computation of the KKT solutions along a state trajectory}.
The KKT solutions computed using \cref{thm:distance-ode} are compared to KKT solutions directly obtained using a solver.
The distance derivatives from \cref{thm:minimum-distance-derivative} are also compared to the actual distance derivative.
In the second example, we solve an obstacle avoidance problem using the CBF-QP formulation in \cref{thm:cbf-qp}.
\revision{We demonstrate that our method allows for real-time collision avoidance between strongly convex pairs.}

\revision{
The simulations are performed on Ubuntu $20.04$ on a laptop with an Intel Core $i7-10870H$ CPU $@ \SI{2.20}{GHz}$.
The nonlinear optimization solver \emph{Ipopt} \cite{wachter2006implementation} is used to compute the actual KKT solutions of the minimum distance problem \cref{eq:strict-convex-min-dist} with warm start enabled.
The \emph{Eigen3} linear algebra library \cite{eigen2010} is used for matrix computations.
The QP \cref{eq:distance-ode-j2} in the KKT solution ODE (when the degenerate active set $\mathcal{J}_2$ is nonempty) is solved by first reformulating the problem in terms of the variable $\dot{\lambda}^*_{\mathcal{J}_2}$ and then using a custom lexicographic Lemke solver \cite[Ch.~2]{murty1988linear} to solve the resulting LCP problem (also see the proof of \cref{thm:distance-ode}).
The CBF-QP \cref{eq:cbf-qp} is solved using \emph{OQSP} \cite{stellato2020osqp}.
The 3D visualizations for the examples are generated using \emph{Meshcat-python} \cite{threejs,meshcat-python}.
The C\texttt{++} library and simulation code for the following examples can be found in the repository\footref{code}.
}

\subsection{Example 1: KKT solution ODE verification}
\label{subsec:example-kkt-solution-ode-verification}

In this subsection, we verify \cref{thm:distance-ode,thm:minimum-distance-derivative} \revision{and show that they can be used to quickly compute the KKT solution and distance derivative along a state trajectory}.
\revision{
The simulation setup consists of a static polytope $\mathcal{C}^{\text{obs}}$ and a quadrotor system with state $x^q = (p^q, v^q, R^q)$, where $(p^q, R^q) \in SE(3)$ is the orientation of the quadrotor and $v^q \in \mathbb{R}^3$ is the translational velocity of the CoM.
The inputs to the quadrotor system are given by $u^q = (T^q, \omega^q)$, where $T^q \in \mathbb{R}_+$ is the net thrust applied by the quadrotor and $\omega^q \in \mathbb{R}^3$ is body frame angular velocity of the quadrotor\footnote{
Generally, a low-level onboard controller uses the net thrust $T$ and body frame angular velocity $\omega$ to compute the individual rotor thrust.
Thus, we don't consider the dynamics of the full quadrotor system.
}.
For the rest of this section, we suppress the superscript `$q$' for the quadrotor system.
The quadrotor system dynamics are given by \cite{lee2010geometric} as
\begin{equation}
\label{eq:quadrotor-dynamics}
\begin{bmatrix}\dot{p} \\ \dot{v} \\ \text{vec}(\dot{R})\end{bmatrix} = \begin{bmatrix}v \\ -ge_3 \\ 0_{9}\end{bmatrix} + \begin{bmatrix}0_{3} & 0_{3 \times 3} \\ \frac{1}{m} Re_3 & 0_{3 \times 3} \\ 0_{9} & \hat{R} \end{bmatrix} \begin{bmatrix}T \\ \omega\end{bmatrix},
\end{equation}
where $g$ is the gravitational acceleration constant, $m$ is the mass of the quadrotor, $e_3 = (0, 0, 1)$, $\text{vec}(\cdot)$ vectorizes a matrix by stacking its columns, $\hat{R} \in \mathbb{R}^{9 \times 3}$ is such that $\hat{R}w = \text{vec}(R\hat{w})$, and $\hat{\omega}$ is defined such that $\hat{\omega} z = w \times z \ \forall z \in \mathbb{R}^3$.

The safe region for the obstacle is directly chosen as the obstacle set $\mathcal{C}^{\text{obs}}$.
The safe region $\mathcal{C}$ for the quadrotor is computed in two steps, as shown in \cref{subfig:quadrotor-uncertainty-safe-set}.
First, we approximate the shape of the quadrotor at state $x = (p, v, R)$ as follows:
\begin{equation}
\label{eq:example-quadrotor-shape-set}
\mathcal{C}^1(x) := \{z \in \mathbb{R}^3: \bar{z} = R^\top(z - p), \ |\bar{z}_1|^{2.5} + |\bar{z}_2|^{2.5} + |\bar{z}_3 / 0.4|^{2.5} + 10^{-2} \lVert\bar{z}\rVert^2_2 \leq 0.3\}.
\end{equation}
Next, we assume that the position of the quadrotor is subject to measurement uncertainties, which is given by an ellipsoidal uncertainty set $\mathcal{C}^2(x)$ defined as
\begin{equation*}
\mathcal{C}^2(x) := \{z \in \mathbb{R}^3: 3z_1^2 + 2z_1 z_2 + 2 z_2^2 + z_3^2 \leq 1 + (8/5\pi) \tan^{-1}(z_3)\}.
\end{equation*}
Note that the size of the uncertainty set depends on the $z$-coordinate of the quadrotor position.
For example, this can be the case when the quadrotor position estimate depends on ground measurements.
The sets $\mathcal{C}^1(x)$ and $\mathcal{C}^2(x)$ are depicted in \cref{subfig:quadrotor-uncertainty-safe-set}.
Further, we can also verify that $\mathcal{C}^1$ and $\mathcal{C}^2$ are strongly convex maps (see \cref{def:strongly-convex-set}).
Finally, the safe region $\mathcal{C}(x)$ of the quadrotor at state $x$ is given by the Minkowski sum of $\mathcal{C}^1(x)$ and $\mathcal{C}^2(x)$, and thus $\mathcal{C}$ is a strongly convex map (see \cref{rem:strongly-convex-map-algebra}).
Then, $(\mathcal{C}, \mathcal{C}^\text{obs})$ forms a strongly convex pair (see \cref{assum:strongly-convex-pair}).

\begin{figure}[tbp]
    \centering
    \hspace*{\fill}%
    \subfloat[
    Snapshots of the quadrotor system with the quadrotor safe region (pink mesh) and the static obstacle polytope (grey).
    ]{
    \label{subfig:example-kkt-ode-env}
    {
    \includegraphics[width=2.20in]{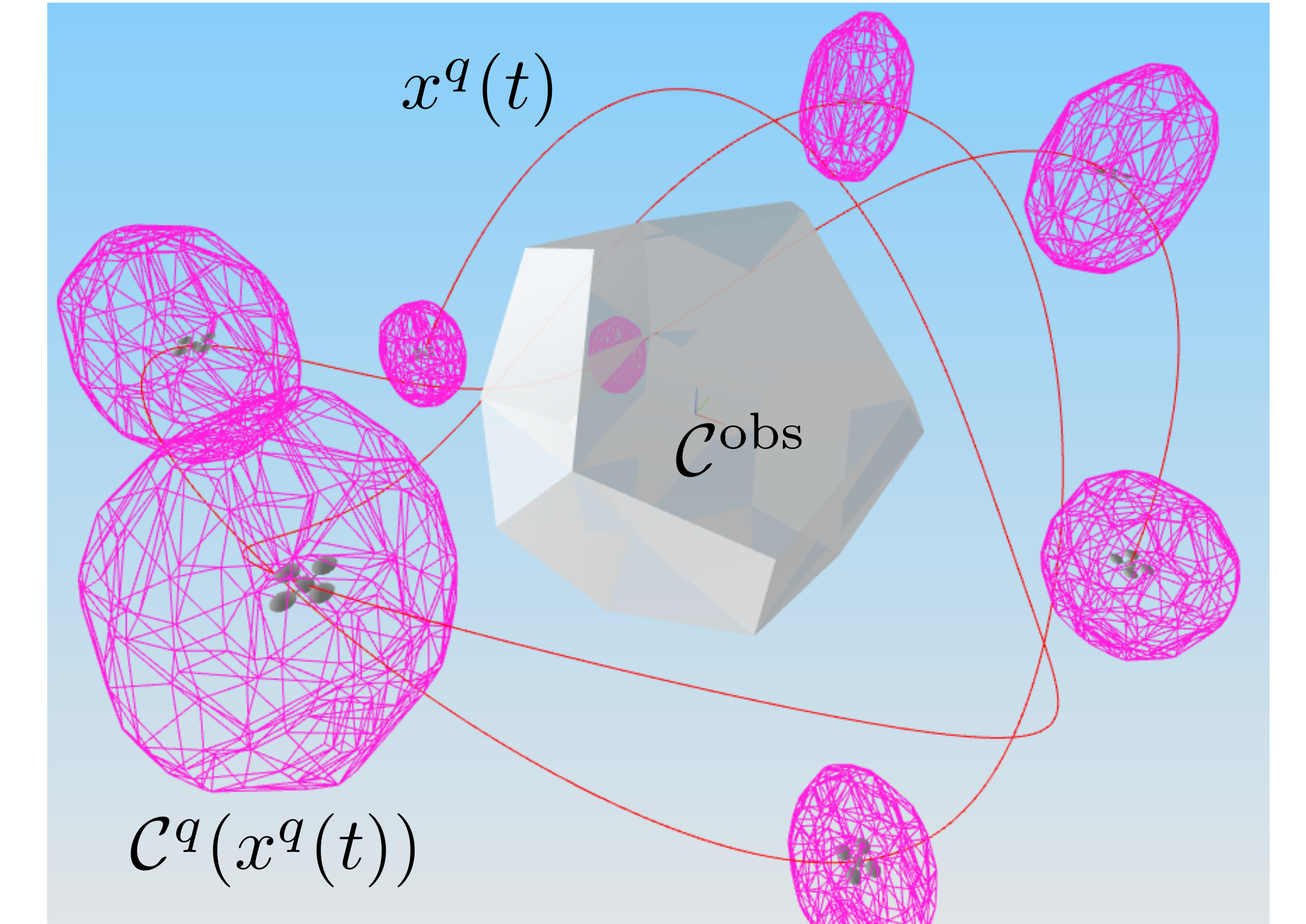} 
    }%
    }%
    \centerhfill%
    \subfloat[
    Relative error of the KKT solution.
    The $y$-axis is in the log scale.
    ]{
    \label{subfig:example-kkt-ode-kkt}
    {
    \includegraphics[width=2.20in]{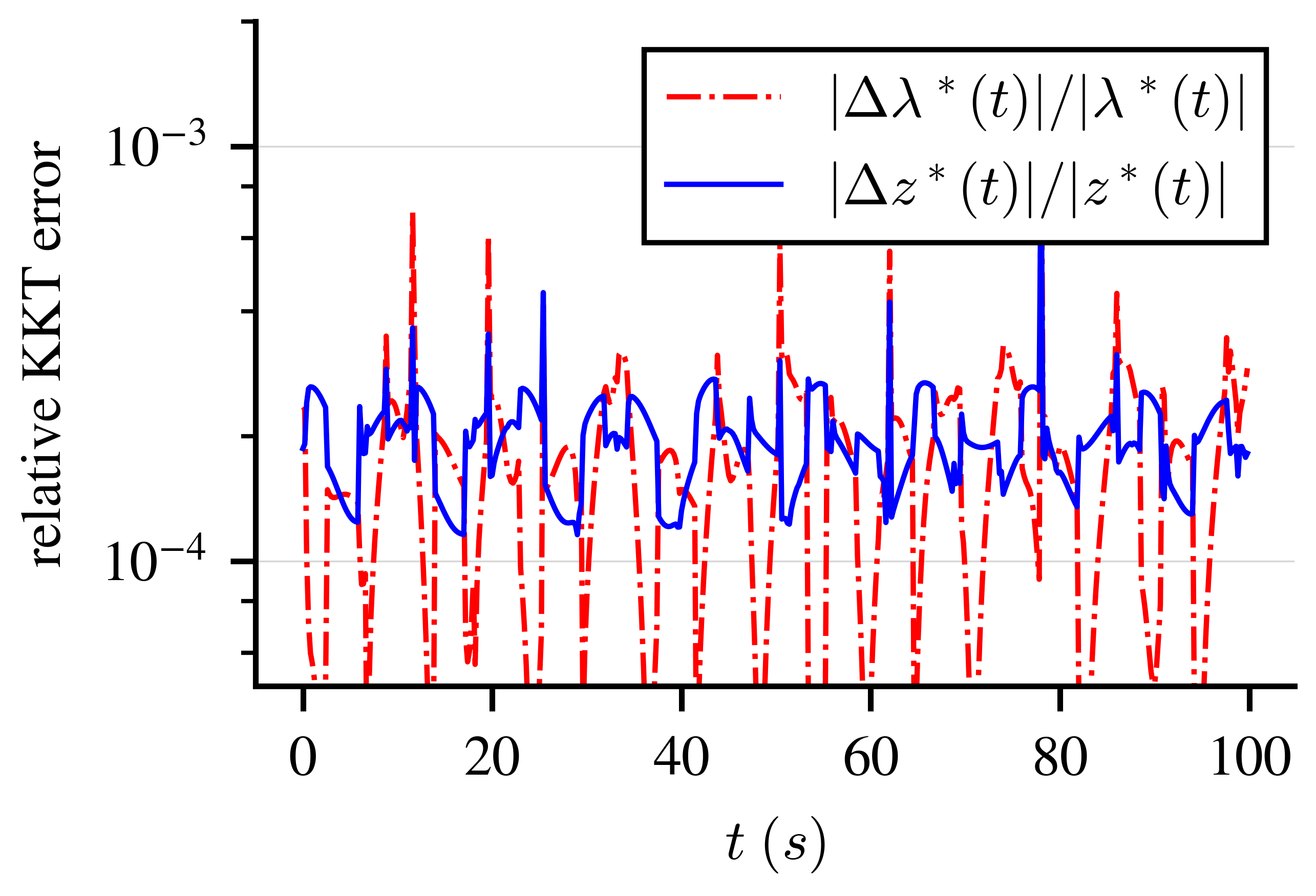}
    }%
    }%
    \hspace*{\fill}%

    \hspace*{\fill}%
    \subfloat[
    Relative error of the minimum distance.
    ]{
    \label{subfig:example-kkt-ode-dist}
    {
    \includegraphics[width=2.20in]{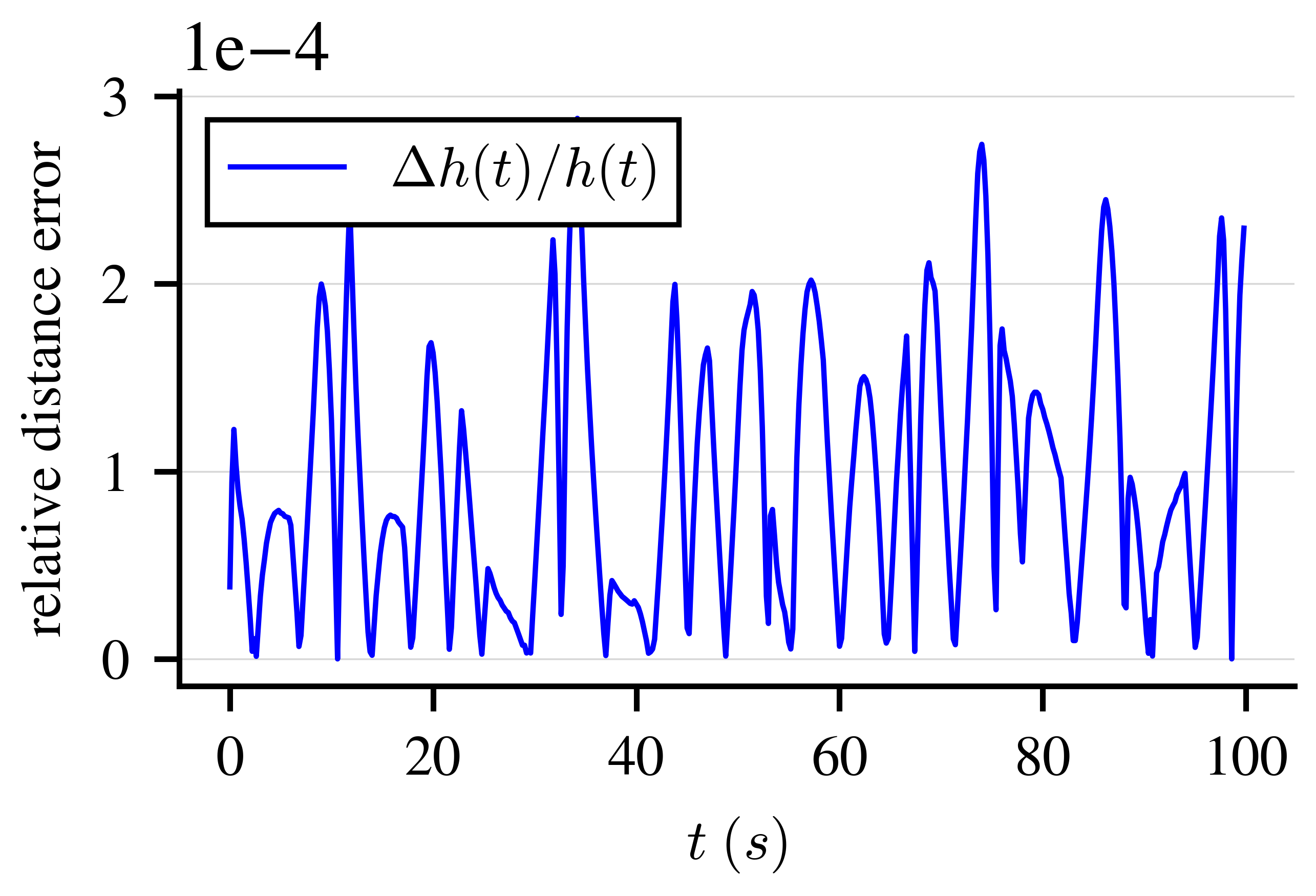}
    }%
    }%
    \centerhfill%
    \subfloat[
    Error in minimum distance derivative.
    ]{
    \label{subfig:example-kkt-ode-Ddist}
    {
    \includegraphics[width=2.20in]{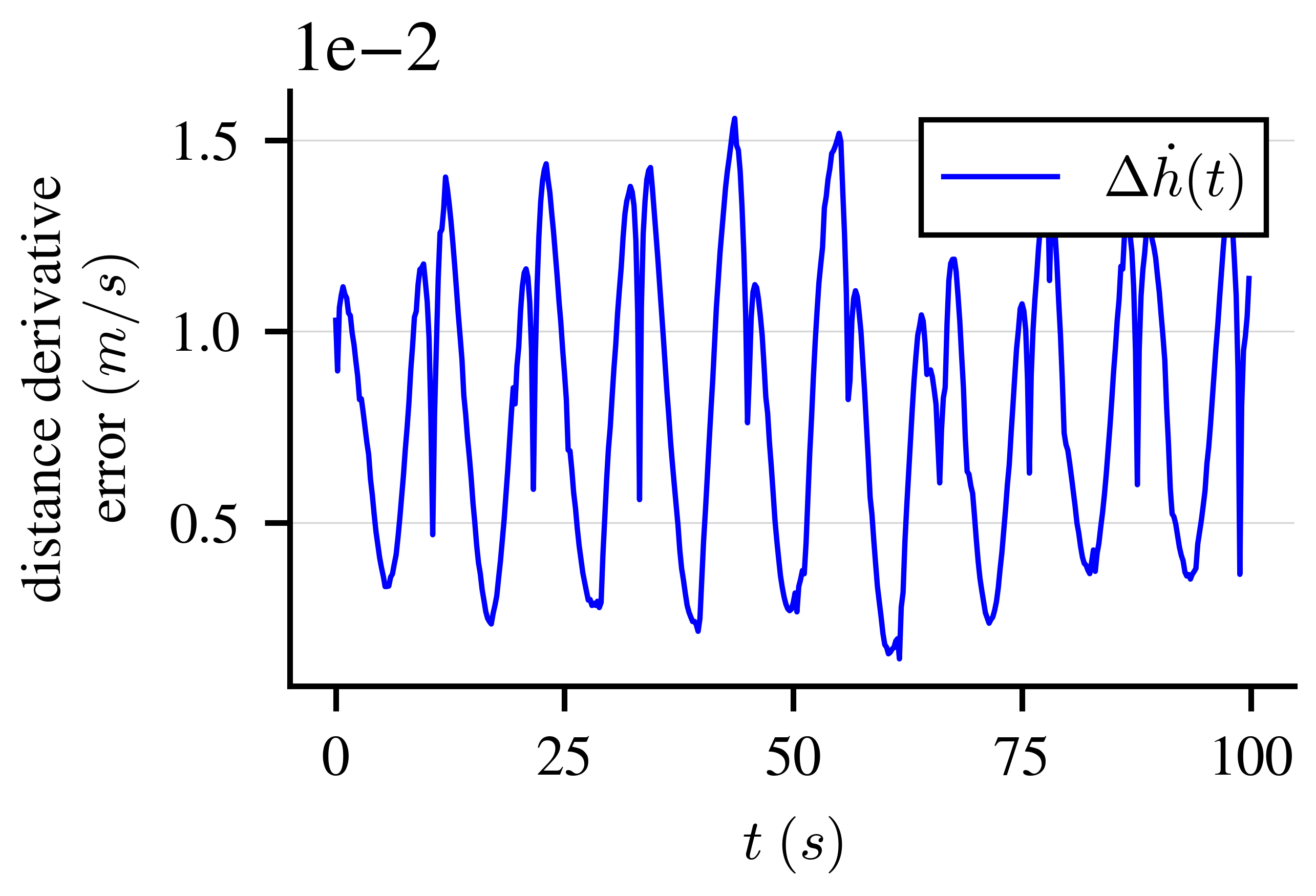}
    }%
    }%
    \hspace*{\fill}%
    \caption{
    Verification of the KKT solution ODE, \cref{thm:distance-ode}, and the derivative of the minimum distance, \cref{thm:minimum-distance-derivative}.
    \revision{
    The simulation environment, \cref{subfig:example-kkt-ode-env}, consists of a static polytope and a quadrotor system with a given reference trajectory.
    The safe region of the quadrotor comprises the quadrotor shape and a state-dependent position uncertainty set, as depicted in \cref{subfig:quadrotor-uncertainty-safe-set}.
    }
    The initial KKT solution is found by solving the minimum distance problem \cref{eq:strict-convex-min-dist}, and subsequently by integration of the KKT solution ODE, \cref{thm:distance-ode}.
    The solution obtained via the KKT solution ODE is compared to the actual KKT solution (obtained by solving \cref{eq:strict-convex-min-dist} at each timestep) in \cref{subfig:example-kkt-ode-kkt,subfig:example-kkt-ode-dist}.
    The minimum distance derivative from \cref{thm:minimum-distance-derivative} is compared to the actual derivative of the minimum distance (obtained via finite difference method) in \cref{subfig:example-kkt-ode-Ddist}.
    }
    \label{fig:example-kkt-ode}
\end{figure}

To verify the KKT solution ODE in \cref{thm:distance-ode}, we chose a predefined sinusoidal reference trajectory for the quadrotor around the obstacle, as shown in \cref{subfig:example-kkt-ode-env}.
The control inputs are then computed using a geometric controller similar to \cite{lee2010geometric}.
Given the control inputs $u$ and the current state $x$, we can compute the state derivative $\dot{x}$ and then use the KKT solution ODE to update the KKT solutions at the next time step.
}
Euler integration is used to integrate the KKT solution ODE in \cref{thm:distance-ode}, with a timestep of $\SI{1}{ms}$.
The actual KKT solution and minimum distance are obtained by solving \cref{eq:strict-convex-min-dist} at each timestep and are denoted by $(z^*, \lambda^*)$ and $h$, respectively.
The errors between the integrated solution from \cref{thm:distance-ode} and the actual solution are denoted by $(\Delta z^*, \Delta \lambda^*)$ and $\Delta h$, for the KKT solution and the minimum distance respectively.
The error between the derivative of the minimum distance obtained from \cref{thm:minimum-distance-derivative} and that from the actual KKT solution is denoted by $\Delta \dot{h}$.
\Cref{tab:example-kkt-ode,fig:example-kkt-ode} show the results from the simulation.
Note that p$50$ and p$99$ denote the $50$-th and $99$-th percentile values respectively.

\revision{
From \cref{tab:example-kkt-ode}, we can see that using the KKT solution ODE to obtain the KKT solution is roughly two orders of magnitude faster than solving the minimum distance problem \cref{eq:strict-convex-min-dist} (with warm start enabled).
As noted at the beginning of this section, when the degenerate active set $\mathcal{J}_2$ is nonempty, the QP \cref{eq:distance-ode-j2} is solved by first rewriting the problem in terms of the $\dot{\lambda}^*_{\mathcal{J}_2}$ variable, and subsequently using an LCP solver (implementation details can be found in the code).
On average, the transformed LCP problem can be solved within $\SI{0.5}{\mu s}$, while the transformation itself can be performed within $\SI{10}{\mu s}$.
Note that the KKT solution time in \cref{tab:example-kkt-ode} includes the time to solve the QP \cref{eq:distance-ode-j2} (whenever applicable).
}
Moreover, the relative KKT solution errors and the relative distance error are on the order of $10^{-3}$.
Similarly, the error in the distance derivative is on the order of $10^{-3} \, m/s$.
\revision{Both of these errors depend on the quadrotor speed, which can be gauged by the maximum distance derivative ($\SI{0.59}{m/s}$ in this case).}
The relative KKT solution errors, the relative distance error, and the distance derivative error are plotted in \cref{subfig:example-kkt-ode-kkt,subfig:example-kkt-ode-dist,subfig:example-kkt-ode-Ddist}.
\revision{To conclude, the KKT solution ODE provides a fast and accurate method to obtain the KKT solutions of the minimum distance problem \cref{eq:strict-convex-min-dist} along a state trajectory}.

\begin{table}[tbp]
\footnotesize
\caption{\revision{Simulation statistics for Example 1 in \cref{subsec:example-kkt-solution-ode-verification}.}}
\label{tab:example-kkt-ode}
\begin{center}
\begin{tabular}{|c|c|c||c|c|c|} \hline
\multicolumn{2}{|l|}{Statistic} & Value & \multicolumn{2}{l|}{Statistic} & Value \\ \hline
\multicolumn{1}{|l|}{\multirow{4}{*}{\multilinecell[l]{KKT ODE solution}{time ($\mu s$)}}} & mean & $7.77$ & \multicolumn{1}{l|}{\multirow{4}{*}{\multilinecell[l]{Distance derivative}{error ($m/s$), \\ $|\Delta \dot{h}|$}}} & mean & $8.09 \times 10^{-3}$ \\ \cline{2-3} \cline{5-6}
                  & std & $3.24$ & & std & $3.71 \times 10^{-3}$ \\ \cline{2-3} \cline{5-6}
                  & p50 & $7.00$ & & p50 & $8.26 \times 10^{-3}$ \\ \cline{2-3} \cline{5-6}
                  & p99 & $22.0$ & & p99 & $15.0 \times 10^{-3}$ \\ \hline
\multicolumn{2}{|l|}{\multilinecell[l]{Mean opt. solution}{time ($\mu s$)}} & $698$ & \multicolumn{2}{l|}{\multilinecell[l]{Max. distance derivative}{($m/s$), $\max \{|\dot{h}|\}$}} & $0.594$ \\[1pt] \hline
\multicolumn{2}{|l|}{\multilinecell[l]{Max. distance relative}{error, $| \Delta h| / h$}} & $2.90 \times 10^{-4}$  & \multicolumn{2}{l|}{\multilinecell[l]{Max. distance error ($m$),}{$\max\{|\Delta h|\}$}} & $6.10 \times 10^{-4}$ \\ \hline
\multicolumn{2}{|l|}{\multilinecell[l]{Max. primal solution}{relative error, $\lVert \Delta z^*\rVert / \lVert z^*\rVert$}} & $1.00 \times 10^{-3}$ & \multicolumn{2}{l|}{\multilinecell[l]{Max. dual solution}{relative error, $\lVert \Delta \lambda^*\rVert / \lVert \lambda^*\rVert$}} & $1.49 \times 10^{-3}$ \\
\hline
\end{tabular}
\end{center}
\end{table}

\subsection{Example 2: Obstacle avoidance using backup CBFs}
\label{subsec:example-cbf-obstacle avoidance}

\begin{figure}[tbp]
    \centering
    \subfloat[
    Snapshots of the quadrotor system as it navigates through a corridor filled with obstacles.
    ]{
    \label{subfig:example-cbf-env}
    {
    \includegraphics[width=4.8in]{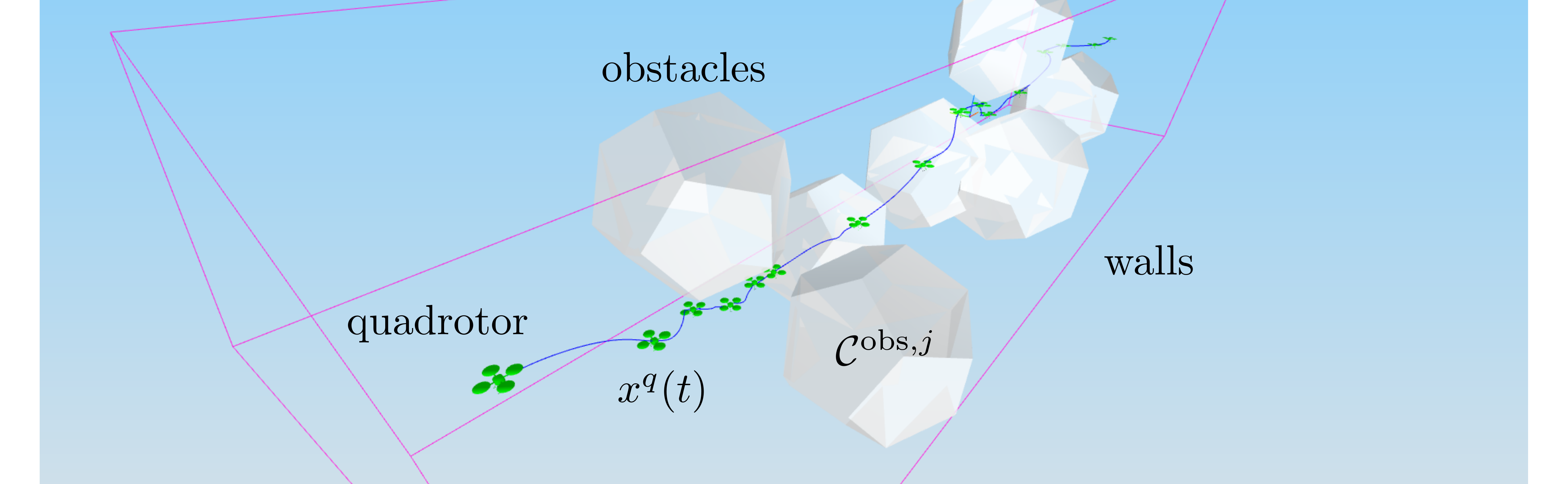}
    }%
    }%

    \hspace*{\fill}%
    \subfloat[
    Minimum distance across all collision pairs.
    ]{
    \label{subfig:example-cbf-dist}
    {
    \includegraphics[width=2.20in]{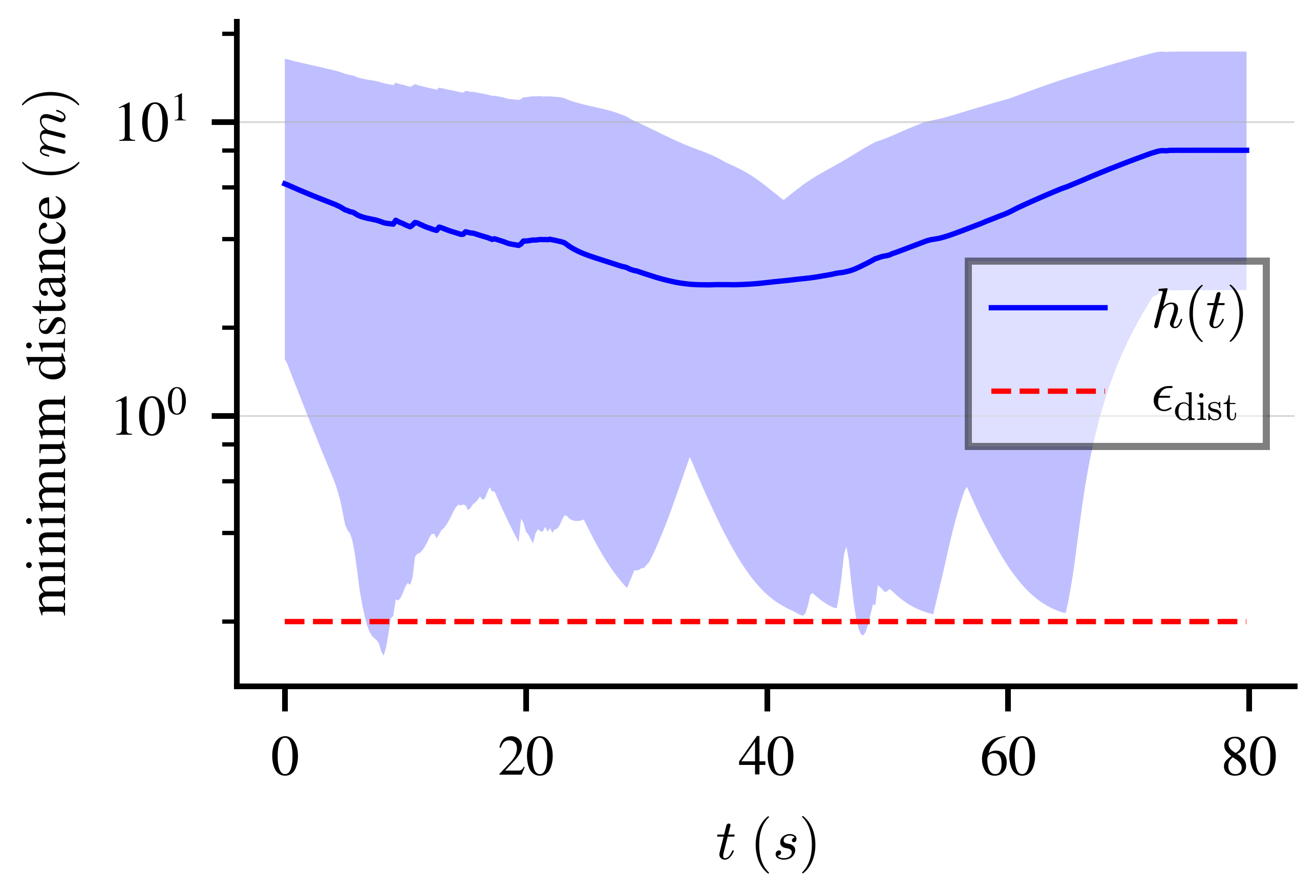}
    }%
    }%
    \centerhfill%
    \subfloat[
    Relative errors of the KKT solutions.
    ]{
    \label{subfig:example-cbf-kkt}
    {
    \includegraphics[width=2.20in]{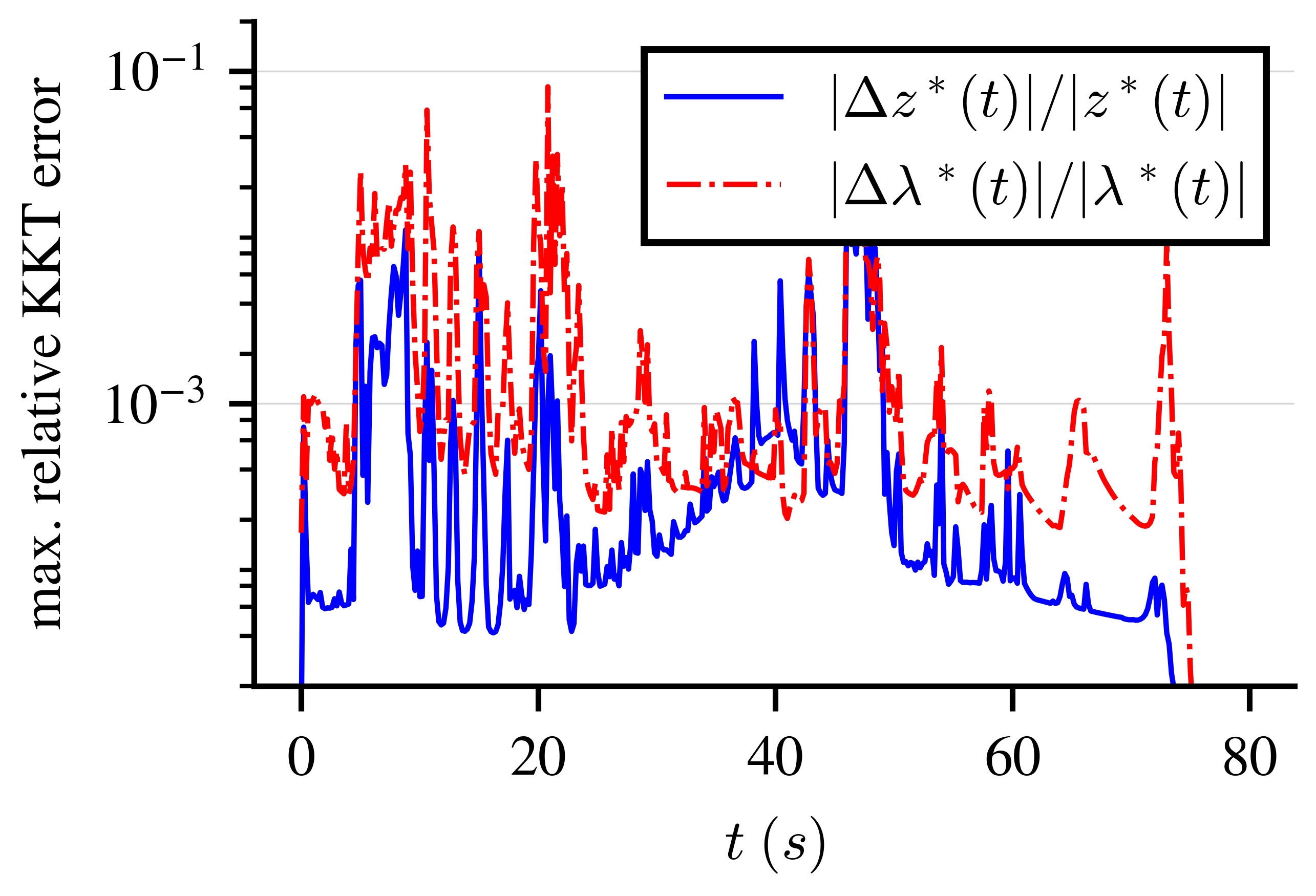}
    }%
    }%
    \hspace*{6pt}%

    \hspace*{\fill}%
    \subfloat[
    Relative errors of the minimum distance.
    ]{
    \label{subfig:example-cbf-rel-dist}
    {
    \includegraphics[width=2.20in]{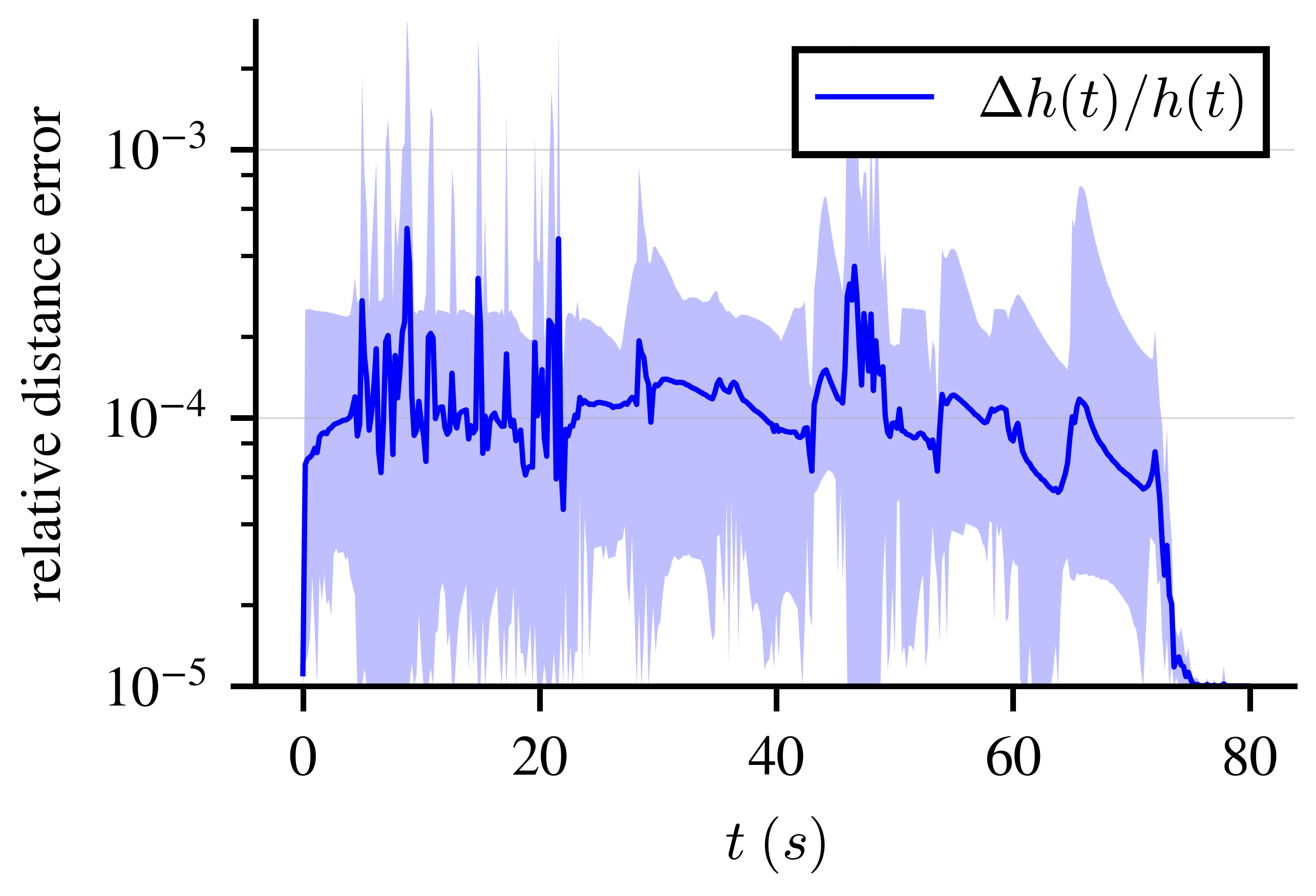}
    }%
    }%
    \hspace*{\fill}%
    \caption{
    \revision{
    Verification of the collision avoidance result \cref{thm:cbf-qp} for a quadrotor system navigating through an obstacle-filled corridor.
    The simulation environment, \cref{subfig:example-cbf-env}, consists of a quadrotor, $7$ polytopic obstacles, and $4$ walls.
    The control task is to navigate through the corridor while avoiding obstacles.
    The CBF-QP \cref{thm:cbf-qp} is used to guarantee the safety of the quadrotor system.
    In \cref{subfig:example-cbf-dist,subfig:example-cbf-rel-dist}, the colored regions show the range of a quantity across all $11$ collision pairs, while the solid lines show the mean values.
    Note that the $y$-axes of all the plots are in the log scale.
    Since, by \cref{subfig:example-cbf-dist}, the minimum distance across all collision pairs is greater than safety margin $\epsilon_\text{dist}$, the quadrotor system safely navigates through the corridor.
    }
    }
    \label{fig:example-cbf-obstacle-avoidance}
\end{figure}

In this subsection, we show how our method can be used for safety for higher-order systems, i.e., with relative degree $> 1$.
Recall that the CBF constraint in \eqref{eq:cbf-qp} uses the first derivative of $h$.
If the distance derivative does not depend on the system input $u$, then the CBF constraint does not depend on the system input and cannot be enforced.
One way to use CBFs for higher-order systems is using HoCBFs~\cite{xiao2019control}, which require higher-order derivatives of the minimum distance.
Such higher-order derivatives may not always exist for general strongly convex maps (although when $r^i = r^j = 1$, the minimum distance is smooth, and higher-order derivatives can be computed by differentiating \cref{eq:strict-convex-kkt-solution-derivative}).
However, verifying safety for systems with bounded inputs using HoCBFs is challenging.

\revision{In this section, we use backup CBFs~\cite{chen2021backup} to construct and enforce a CBF for the quadrotor system \cref{eq:quadrotor-dynamics}.}
In the method of backup CBFs, a backup controller is used to drive the system to a control invariant set contained within the safe set.
The minimum distance between the closed-loop trajectory from the backup controller (the backup trajectory) and the unsafe set is used as the CBF (the backup CBF).
Intuitively, instead of directly requiring the state to be inside the safe set, the backup CBF method requires that the entire backup trajectory (starting from the current state) lie in the safe set.

\revision{
Note that if the CBF $h$ for the quadrotor system depends only on the position $p$ and orientation $R$, then $\dot{h}$ does not depend on the thrust $T$.
In particular, such a CBF ignores the thrust-orientation dynamics of the quadrotor.
Following the idea of backup CBFs, we construct a CBF for the quadrotor system by extending the safe region of the quadrotor depending on its velocity and orientation.
For example, if the quadrotor has a large speed, the safe region is extended by a large amount along its velocity direction.
Likewise, if the orientation $R$ of the quadrotor is such that $Re_3$ aligns with the velocity $v$, then the safe region is extended by a large amount; this is based on the heuristic that $v^\top Re_3$ should be small in order to stop the quadrotor.

To define the safe region, we first design a braking corridor as follows (see \cite{wang2017safety}):
\begin{equation*}
\mathcal{C}^3(x) := \{z \in \mathbb{R}^3: \bar{z} = z - \xi/2, \ \bar{z}^\top (25 I - 24 \xi \xi^\top / \lVert \xi\rVert^2) \bar{z} \leq (0.1)^2 + \lVert \xi\rVert^2 / 4\},
\end{equation*}
where the braking distance vector $\xi$ at state $x$ is given by
\begin{equation*}
\xi = v (T_\text{brake} + T_\text{rot} (1 + v^\top Re_3 / \lVert v\rVert)),
\end{equation*}
where $T_\text{brake}$ and $T_\text{rot}$ are braking time constants.
Also, the norms of all the vectors in the definition of $\mathcal{C}^3$ are computed as $\lVert v\rVert = \sqrt{\epsilon^2 + v^\top v}$, where $\epsilon > 0$ is a small constant, to ensure the smoothness of the norm function.
Then, the map $\mathcal{C}^3$ is a strongly convex map (as defined in \cref{def:strongly-convex-set}).
The safe region for the quadrotor is defined as the Minkowski sum of the quadrotor shape set $\mathcal{C}^1(x)$ (defined in \cref{eq:example-quadrotor-shape-set}) and the braking corridor set $\mathcal{C}^3(x)$.
\Cref{subfig:quadrotor-corridor-safe-set} depicts the construction of this dynamic safe region for the quadrotor system.
}

\revision{
The simulation environment consists of $7$ obstacles in a corridor defined by $4$ walls, as shown in \cref{subfig:example-cbf-env}.
The control task for the quadrotor system is to navigate safely through the corridor.
We use a geometric tracking controller (see \cite{lee2010geometric}) to generate the nominal control inputs.
Note that the nominal control inputs do not guarantee the system's safety.
In addition to the collision avoidance CBF constraints, we bound the quadrotor velocity and roll-pitch angles using CBF constraints (the implementation details can be found in the code).
The CBF-QP used to compute the control input has $4$ variables with $11$ collision avoidance CBF constraints, $2$ state CBF constraints, and $8$ input bound constraints, for a total of $21$ constraints.
}
Notably, the CBF-QP would have the same size and sparsity structure if the minimum distance between the robots had an explicit form (such as for spheres).
Similar to \cref{subsec:example-kkt-solution-ode-verification}, Euler integration is used to integrate the KKT solution ODE in \cref{thm:distance-ode}, with a timestep of $\SI{1}{ms}$.
The actual solutions $(z^*, \lambda^*)$ and $h$ and the errors $(\Delta z^*, \Delta \lambda^*)$, and $\Delta h$ are computed similar to \cref{subsec:example-kkt-solution-ode-verification}.
\Cref{tab:example-cbf-obstacle-avoidance,fig:example-cbf-obstacle-avoidance} show the results from the simulation.
In \cref{tab:example-cbf-obstacle-avoidance}, the computation time for the KKT solution ODE is the time it takes to update all KKT solutions for the $11$ collision pairs.

\begin{table}[tbp]
\footnotesize
\caption{Simulation statistics for Example 2 in \cref{subsec:example-cbf-obstacle avoidance}.}
\label{tab:example-cbf-obstacle-avoidance}
\begin{center}
\begin{tabular}{|c|c|c||cc|c|} \hline
\multicolumn{2}{|l|}{Statistic} & Value & \multicolumn{2}{l|}{Statistic} & Value \\ \hline
\multicolumn{1}{|l|}{\multirow{4}{*}{\multilinecell[l]{KKT solution ODE}{time (11x) ($\mu s$)}}} & mean & $66.5$ & \multicolumn{1}{l}{\multirow{4}{*}{\multilinecell[l]{CBF-QP solution}{time ($ms$)}}} & \multicolumn{1}{|l|}{mean} & $0.147$ \\ \cline{2-3} \cline{5-6}
                  & std & $9.46$ & & \multicolumn{1}{|l|}{std} & $0.342$ \\ \cline{2-3} \cline{5-6}
                  & p50 & $64.0$ & & \multicolumn{1}{|l|}{p50} & $0.038$ \\ \cline{2-3} \cline{5-6}
                  & p99 & $115$ & & \multicolumn{1}{|l|}{p99} & $1.34$ \\ \hline
\multicolumn{2}{|l|}{\multilinecell[l]{Max. distance relative}{error, $| \Delta h| / h$}} & $6.56 \times 10^{-3}$  & \multicolumn{2}{l|}{\multilinecell[l]{Max. primal solution}{relative error, $\lVert \Delta z^*\rVert / \lVert z^*\rVert$}} & $0.0143$ \\ \hline
\multicolumn{2}{|l|}{\multilinecell[l]{Max. dual solution}{relative error, $\lVert \Delta \lambda^*\rVert / \lVert \lambda^*\rVert$}} & $0.0871$  & \multicolumn{2}{l|}{\multilinecell[l]{Fraction of optimization}{re-initializations}} & $0.0290$ \\
\hline
\end{tabular}
\end{center}
\end{table}

\revision{
From \cref{subfig:example-cbf-dist}, we can see that the minimum distance between all safe regions is always above $\epsilon_\text{dist}$, a positive safety margin, for the entire trajectory (the minimum distance can dip below the safety margin slightly due to numerical errors).
Thus, the CBF-QP guarantees that the closed-loop quadrotor system remains safe.
From \cref{tab:example-cbf-obstacle-avoidance}, we can see that the KKT solution ODE can update the KKT solutions for all $11$ collision pairs reliably and accurately at $\gg \SI{1000}{Hz}$.
However, for $2.90\%$ of KKT solution updates, we need to re-initialize the minimum distance optimization problem for at least one collision pair due to inaccurate KKT solution updates (the time it takes to check the KKT error is included in the KKT solution ODE time in \cref{tab:example-cbf-obstacle-avoidance}).
We note that such re-initializations are needed due to the highly dynamic nature of the quadrotor safe region, and thus were not needed for the example in \cref{subsec:example-kkt-solution-ode-verification}.
\revision{The larger errors in the KKT solution (compared to the example in \cref{subsec:example-kkt-solution-ode-verification}) can be attributed to large distance derivative values; $8.65 \, m/s$ for this example as compared to $0.59 \, m/s$ for the previous example.}
Even for such dynamic scenarios, we can use the KKT solution ODE at $\SI{1000}{Hz}$.
In contrast, the direct solution time for the minimum distance problem is $\SI{7.17}{ms}$ (for all $11$ collision pairs).
Finally, we note that the CBF-QP can be solved reliably at $\SI{500}{Hz}$.
Importantly, there is no control frequency penalty for solving the CBF-QP using our method compared to a case where the minimum distance can be calculated explicitly.
This justifies our claim that using the exact convex shape of a safe region (which can be represented by a strongly convex map) only has a minor performance penalty compared to using a convex overapproximation.
To conclude, our method allows for real-time collision avoidance between strongly convex maps.
}

\section{Conclusion}
\label{sec:conclusion}

In this paper, we presented a general framework for \revision{collision} avoidance between strongly convex maps using the KKT solution of the minimum distance optimization problem.
We showed that the time-varying KKT solution of the minimum distance problem between a strongly convex pair (along a state trajectory) can be computed using an ODE.
Moreover, we showed the explicit form of the minimum distance derivative and that collision avoidance between a strongly convex pair can be achieved using a CBF-QP for systems with control affine dynamics, enabling real-time implementation.
We validated our results on the KKT solution ODE and the CBF-QP formulation with an obstacle avoidance task for a \revision{quadrotor} system with strongly convex-shaped robots.
\revision{
Our method enables real-time collision avoidance using CBFs for state-dependent convex sets (without overapproximation), allows for convex set algebraic operations, and is compatible with (and enhances) other methods such as backup CBFs, distributed CBFs, robust CBFs, and higher-order CBFs.
Finally, we proved strong safety for the CBF-QP (using discontinuous dynamical systems theory) with minimal assumptions on the CBF-QP structure.
}





\appendix

\section{Proof of \crefbookmark{Lemma}{lem:strict-convex-lipschitz-continuity}}%
\label{app:proof-strict-convex-lipschitz-continuity}%

\revision{From \cref{assum:strongly-convex-pair}, at least one of $\mathcal{C}^i$ or $\mathcal{C}^j$ is a strongly convex map; let $\mathcal{C}^i$ be a strongly convex map without loss of generality.}

\subsection{Uniqueness of optimal solution}
\label{app:proof-strict-convex-lipschitz-continuity-uniqueness}%

\revision{Let $\mathcal{W} \supset \mathcal{C}^i(x^i)$ be a compact set.}
By \cref{def:strongly-convex-set}, $A^i_k(x^i, \cdot)$ is strongly convex \revision{on $\mathcal{W}$ for all $k \in [r^i]$}.
Thus, $\mathcal{C}^i(x^i)$, \revision{which is the $0$-sublevel set of $\max_k A^i_k(x^i, \cdot)$,} is a strictly convex set, i.e., $\mu z^{i1} + (1-\mu) z^{i2} \in \text{Int} (\mathcal{C}^i(x^i))$ for all $\mu \in (0, 1)$, and $z^{i1}, z^{i2} \in \mathcal{C}^i(x^i)$.

\revision{
Now, we prove the uniqueness of the optimal solution of \cref{eq:strict-convex-min-dist} using the fact that $\mathcal{C}^i(x^i)$ is a strictly convex set.
}
The optimization problem \cref{eq:strict-convex-min-dist} is equivalent to
\begin{equation}
\label{eq:convex-set-min-dist-reduced}
h(x) = \min_{s} \ \ \{\lVert s \rVert_2^2 : \ s \in \mathcal{C}^i(x^i) -\mathcal{C}^j(x^j)\},
\end{equation}
where $\mathcal{C}^i(x^i) - \mathcal{C}^j(x^j) := \{s = z^i - z^j: z^i \in \mathcal{C}^i(x^i), z^j \in \mathcal{C}^j(x^j)\}$ is the Minkowski sum of $\mathcal{C}^i(x^i)$ and $-\mathcal{C}^j(x^j)$.
Since the cost function in \cref{eq:convex-set-min-dist-reduced} is strictly convex, there is a unique optimal solution $s^*(x)$ of \cref{eq:convex-set-min-dist-reduced}.
Then, since $\mathcal{C}^i(x^i)$ is strictly convex and $h(x) > 0$, there is a unique optimal solution of \cref{eq:strict-convex-min-dist} at $x$ satisfying $s^*(x) = z^{i*}(x) - z^{j*}(x)$.

\subsection{Continuity of optimal solution}
\label{app:proof-strict-convex-lipschitz-continuity-continuity}%

\revision{To show the continuity of the optimal solution $z^*(x)$, we will show that $\mathcal{C}^i$ and $\mathcal{C}^j$ are \emph{continuous convex maps}.}
\begin{definition}[Continuous convex map]%
\label{def:continuous-convex-set}%
A set-valued map $\mathcal{C}^i$ is a continuous convex map if:
\begin{enumerate}[ref={\thedefinition.\arabic*}, leftmargin=*]
    \item \label[definition]{subdef:continuous-convex-set-compact-regular}
    The set $\mathcal{C}^i(x^i)$ is convex, compact, and has a non-empty interior $\forall x^i \in \mathcal{X}^i$.
    
    \item \label[definition]{subdef:continuous-convex-set-continuous}
    The set-valued map $\mathcal{C}^i$ is continuous, i.e., both lower and upper semi-continuous.
\end{enumerate}

\end{definition}
Since \cref{assum:strongly-convex-pair} holds, $\mathcal{C}^i$ and $\mathcal{C}^j$ are smooth convex maps.
So, $\mathcal{C}^i$ and $\mathcal{C}^j$ satisfy \cref{subdef:continuous-convex-set-compact-regular} (by \cref{def:smooth-convex-set}).
Also, by \cref{def:smooth-convex-set}, $A^i$ and $A^j$ are continuous and have convex component functions, and \revision{$\mathcal{C}^i(x^i)$ and $\mathcal{C}^j(x^j)$ have non-empty interiors for all $x \in \mathcal{X}$.
So, $\mathcal{C}^i$ and $\mathcal{C}^j$ are upper and lower semi-continuous set-valued maps~\cite[Ex.~5.10]{rockafellar2009variational}}, and thus are continuous convex maps.

\revision{
Then, the continuity of the minimum distance function $h$ can be shown using the fact that $\mathcal{C}^i$ and $\mathcal{C}^j$ are continuous convex maps and using the maximum theorem~\cite[Ch.~VI]{berge1963topological} (also see \cite[Thm.~1.17]{rockafellar2009variational}) on \cref{eq:strict-convex-min-dist}.
Since $h(x) > 0$ by assumption, we can find a neighborhood $\mathcal{N}(x)$ of $x$ such that $h(x') > 0 \ \forall x' \in \mathcal{N}(x)$.
Then, by \cref{app:proof-strict-convex-lipschitz-continuity-uniqueness}, there is unique optimal solution of \cref{eq:strict-convex-min-dist} for all $x' \in \mathcal{N}(x)$.
Applying the maximum theorem on $\mathcal{N}(x)$, we get that the optimal solution $z^*(\cdot)$ is continuous on $\mathcal{N}(x)$.
}






\subsection{Local Lipschitz continuity of the minimum distance function}
\label{app:proof-strict-convex-lipschitz-continuity-lipschitz-continuity}%

\revision{We show the local Lipschitz continuity of $h$ using \cite[Thm.~1]{klatte1985stability}.}
Let $\mathcal{W} \supset \mathcal{C}^i(x^i) \times \mathcal{C}^j(x^j)$ be an open, convex, and bounded set \revision{(this is possible because $\mathcal{C}^i(x^i)$ and $\mathcal{C}^j(x^j)$ are compact)}.
We first note that the objective $\lVert z^i - z^j\rVert_2^2$ is \revision{globally} Lipschitz continuous \revision{in $(x, z)$} on the set $\mathcal{X} \times \mathcal{W}$, since $\mathcal{W}$ is bounded.
By \cref{def:smooth-convex-set}, \revision{and the boundedness of $\mathcal{W}$, $\max_{z \in \mathcal{W}} \lVert \nabla_{x^i} A^i_k(x^i, z^i)\rVert_2$ is bounded for all $k \in [r^i]$ (and likewise for $A^j_k$).}
So, $A^i_{k^i}(\cdot, z^i)$ and $A^j_{k^j}(\cdot, z^j)$ are locally Lipschitz continuous at $x$ for each $z \in \mathcal{W}$, $k^i \in [r^i]$, and $k^j \in [r^j]$, with a Lipschitz constant independent of $z \in \mathcal{W}$.
Then, by~\cite[Thm.~1]{klatte1985stability}, $h$ is locally Lipschitz continuous at $x$.
\revision{Since this holds at each $x \in \mathcal{X}$ (even if $h(x) = 0$), $h$ is a locally Lipschitz continuous function.}

\revision{
We note that the above proof does not require \cref{assum:strongly-convex-pair}, but only that $\mathcal{C}^i$ and $\mathcal{C}^j$ are smooth convex maps.
Additionally, the vector functions $A^i$ and $A^j$ only need to be once continuously differentiable.
}

\section{Proof of \crefbookmark{Lemma}{lem:strict-convex-ssosc}}%
\label{app:proof-strict-convex-ssosc}%
The KKT conditions are necessary and sufficient for the optimality of \cref{eq:strict-convex-min-dist}, under \cref{def:smooth-convex-set}.
\revision{Since $h(x) > 0$, by \cref{lem:strict-convex-lipschitz-continuity}, there is a unique primal optimal solution at $x$, denoted by $z^*(x)$}.
So, there is at least one KKT solution $(z^*(x), \lambda^*)$ for \cref{eq:strict-convex-min-dist} which satisfies the KKT conditions \cref{eq:strict-convex-kkt}.
In particular, the KKT solution satisfies the condition \cref{subeq:strict-convex-kkt-gradient}, which can be expanded using \cref{eq:strict-convex-lagrangian} as
\begin{flalign}
\label{eq:strict-convex-kkt-gradient-expanded}
&& \underbrace{\begin{bmatrix} [D_{z^i} A^{i}(x^i,z^{i*}(x))]^\top & 0_{l\times r^j} \\ 0_{l\times r^i} & [D_{z^j} A^{j}(x^j,z^{j*}(x))]^\top\end{bmatrix}}_{= [D_z A(x,z^*(x))]^\top}\begin{bmatrix}\lambda^{i*} \\ \lambda^{j*}\end{bmatrix} = \revision{\begin{bmatrix}-2(z^{i*}(x) - z^{j*}(x)) \\ 2(z^{i*}(x) - z^{j*}(x))\end{bmatrix}.}
\end{flalign}
\revision{Since $h(x) > 0$ by assumption, $z^{i*}(x) \neq z^{j*}(x)$}.
From \cref{eq:strict-convex-kkt-gradient-expanded}, since the RHS is not zero, $\lambda^{i*}_{k^i} > 0$ and $\lambda^{j*}_{k^j} > 0$ for some $k^i \in [r^i]$ and $k^j \in [r^j]$.
The $z$-Hessian of the Lagrangian $L$ is given by
\begin{equation*}
\label{eq:strict-convex-kkt-hessian-expanded}
\nabla_z^2 L(x, z^*(x), \lambda^*) = \begin{bmatrix}2I + \sum_{m=1}^{r^i} \, \lambda^{i*}_m \nabla_{z^i}^2 A^i_m & -2I \\ -2I & 2I + \sum_{n=1}^{r^j} \, \lambda^{j*}_{n} \nabla^2_{z^j} A^j_n \end{bmatrix}(x, z^*(x)).
\end{equation*}
\revision{
By \cref{assum:strongly-convex-pair}, $(\mathcal{C}^i, \mathcal{C}^j)$ is a strongly convex pair, i.e., at least one of $\mathcal{C}^i$ or $\mathcal{C}^j$ is a strongly convex map.
Without loss of generality, let $\mathcal{C}^i$ be a strongly convex map.
}
Using $\lambda^* \geq 0$ and $\nabla_{z^i}^2 A^i_m(x^i, z^{i*}(x)) \succ 0$ (by \cref{def:strongly-convex-set}), we get that
\begin{equation*}
\textstyle{\sum}_{m=1}^{r^i} \, \lambda^{i*}_m \nabla_{z^i}^2 A^i_m(x^i, z^{i*}(x)) \succeq \lambda^{i*}_{k^i} \nabla_{z^i}^2 A^i_{k^i}(x^i, z^{i*}(x)) \succ 0.
\end{equation*}

Thus, $\nabla_z^2 L(x, z^*(x), \lambda^*) \succ 0$, meaning \cref{eq:strict-convex-ssosc} is satisfied, and SSOSC holds at $x$.

\section{Proof of \crefbookmark{Theorem}{thm:distance-ode}}%
\label{app:proof-distance-ode}%
\revision{Given the assumptions of \cref{thm:distance-ode}, \cref{lem:strict-convex-unique-continuous-kkt-solution} shows that the KKT solution is unique for all $t \geq t_0$.}
Consider the cases when \revision{the degenerate active set of constraints} $\mathcal{J}_2(x(t)) = \emptyset$ and $\mathcal{J}_2(x(t)) \neq \emptyset$ \revision{(see \cref{eq:strict-convex-kkt-active-set})}.
For the first case, \cref{lem:strict-convex-kkt-solution-c1} provides the derivative of the KKT solution, and for the second case, \cref{prop:strict-convex-kkt-solution-directional-derivative} provides the directional derivative.
To use these derivatives to integrate the KKT solution numerically, we add stabilizing terms that ensure convergence of the numerical KKT solution in the locality of the actual KKT solution.
The KKT solution is the pair $(z, \lambda)$ that uniquely satisfies the KKT conditions \cref{eq:strict-convex-kkt}.
Thus, we add the residual error in the KKT conditions to the ODE for both cases.

\begin{enumerate}[labelindent=0pt, itemindent=!, leftmargin=*]
\item Case 1: $\mathcal{J}_2(x(t)) = \emptyset$:
When $\mathcal{J}_2(x(t)) = \emptyset$, strict complementary slackness holds.
By \cref{lem:strict-convex-kkt-solution-c1}, strict complementary slackness holds in a neighborhood of $x(t)$.
Thus, to satisfy the KKT conditions \cref{eq:strict-convex-kkt}, only \cref{subeq:strict-convex-kkt-gradient} and \cref{subeq:strict-convex-kkt-slackness} need to be satisfied, i.e., $e_{kkt}(x(t))$ \revision{(see \cref{eq:distance-ode-e-kkt})} should be zero.
Moreover, by \cref{lem:strict-convex-kkt-solution-c1}, \cref{eq:strict-convex-kkt-solution-derivative} gives the derivative of the KKT solution \revision{as a function of the state.
By the differentiability of $x(t)$, we get that the KKT solutions are differentiable as a function of time at $t$}.
Adding $e_{kkt}(x(t))$ as a stabilizing term to \cref{eq:strict-convex-kkt-solution-derivative}, we get,
\begin{equation*}
    Q(x(t))\begin{bmatrix}\dot{z}^*(t) \\ \dot{\lambda}^*(t)\end{bmatrix} = V(x(t)) \dot{x}(t) - \kappa e_{kkt}(x(t)),
\end{equation*}
which is the same as \cref{eq:distance-ode-empty-j2}.

\item Case 2: $\mathcal{J}_2(x(t)) \neq \emptyset$:
In this case, \cref{eq:strict-convex-kkt-right-derivative} provides the \revision{directional derivative of the KKT solution along $\mathring{x}$.
Since $x(t)$ is differentiable, the KKT solutions are right-differentiable as a function of time at $t$ with $\mathring{x} = \dot{x}(t)$}.
Similar to case $1$, we can add stabilizing terms to all the equality constraints in \cref{eq:strict-convex-kkt-right-derivative} to get,
\begin{subequations}
\label{eq:app-strict-convex-kkt-right-derivative}%
\begin{alignat}{2}
& \nabla_z^2L \mathring{z} + D_z A^\top \mathring{\lambda} = -D_x (\nabla_z L)[\dot{x}(t)] - \kappa \nabla_z L &&, \label{subeq:app-strict-convex-kkt-right-derivative-gradient} \\
& D_z A_{\mathcal{J}_1} \mathring{z} = -D_x A_{\mathcal{J}_1} \dot{x}(t) - \kappa A_{\mathcal{J}_1}, \label{subeq:app-strict-convex-kkt-right-derivative-feasibility} \\
& D_z A_{\mathcal{J}_2} \mathring{z} \leq -D_x A_{\mathcal{J}_2} \dot{x}(t), \\
& \mathring{\lambda}_{\mathcal{J}_{0c}} = -\kappa \lambda^*_{\mathcal{J}_{0c}}, \quad \mathring{\lambda}_{\mathcal{J}_2} \geq 0, \label{subeq:app-strict-convex-kkt-right-derivative-non-neg}\\
& \mathring{\lambda}_{\mathcal{J}_2}^\top (D_z A_{\mathcal{J}_2} \mathring{z} + D_x A_{\mathcal{J}_2} \dot{x}(t)) = 0, \label{subeq:app-strict-convex-kkt-right-derivative-slackness}%
\end{alignat}%
\end{subequations}%
where all terms are evaluated at $(x(t), z^*(t), \lambda^*(t))$, \revision{and the strictly active set $\mathcal{J}_1$ and the degenerate active set $\mathcal{J}_2$ are defined in \cref{eq:strict-convex-kkt-active-set}.
$\mathcal{J}_{0c}$ (the inactive set) is the complement of the active set $\mathcal{J}_0$}.
Note that,
\begin{equation*}
    (D_z A)^\top \mathring{\lambda} = (D_z A_{\mathcal{J}_{0c}})^\top \mathring{\lambda}_{\mathcal{J}_{0c}} + (D_z A_{\mathcal{J}_{1}})^\top \mathring{\lambda}_{\mathcal{J}_{1}} + (D_z A_{\mathcal{J}_{2}})^\top \mathring{\lambda}_{\mathcal{J}_{2}}.
\end{equation*}
Substituting the above value in \cref{subeq:app-strict-convex-kkt-right-derivative-gradient}, and using the value of $\mathring{\lambda}_{\mathcal{J}_{0c}}$ from \cref{subeq:app-strict-convex-kkt-right-derivative-non-neg}, we get that
\begin{align*}
    \nabla_z^2L \mathring{z} + D_z A_{\mathcal{J}_{1}}^\top \mathring{\lambda}_{\mathcal{J}_{1}} + D_z A_{\mathcal{J}_{2}}^\top \mathring{\lambda}_{\mathcal{J}_{2}} = & \ -D_x (\nabla_z L)[\dot{x}(t)] - \kappa \nabla_z L + \kappa D_z A_{\mathcal{J}_{0c}}^\top \lambda^*_{\mathcal{J}_{0c}}, \\
    = & \ -V_1 \dot{x}(t) - \tilde{e}_{1, kkt}.
\end{align*}
Rewriting \cref{eq:app-strict-convex-kkt-right-derivative} using the constants in \cref{eq:distance-ode-j2-consts} and the above equation, we obtain
\begin{subequations}
\label{eq:app-strict-convex-kkt-right-derivative-kkt}%
\begin{alignat}{2}
& Q_{11} \mathring{z} + P_{eq}^\top \mathring{\lambda}_{\mathcal{J}_{1}} + P_{in}^\top \mathring{\lambda}_{\mathcal{J}_{2}} = -V_1 \dot{x}(t) - \tilde{e}_{1, kkt},\\
& P_{eq} \mathring{z} = q_{eq} \dot{x}(t) + \tilde{e}_{2, kkt}, \\
& P_{in} \mathring{z} \leq q_{in} \dot{x}(t), \quad \mathring{\lambda}_{\mathcal{J}_2} \geq 0, \quad \mathring{\lambda}_{\mathcal{J}_2}^\top (P_{in} \mathring{z} - q_{in} \dot{x}(t)) = 0.%
\end{alignat}%
\end{subequations}
The above constraints are the KKT conditions of the QP \cref{eq:distance-ode-j2}, with $(\mathring{\lambda}_{\mathcal{J}_1}, \mathring{\lambda}_{\mathcal{J}_2}) = (\lambda^*_{eq}, \lambda^*_{in})$.
By \cref{lem:strict-convex-ssosc}, we have that $Q_{11} \succ 0$, and thus the unique optimal solution of the QP \cref{eq:distance-ode-j2} is the \revision{right}-derivative of the KKT solution \revision{at time $t$}.
\revision{We also note that the variables $\mathring{z}$ and $\mathring{\lambda}_{\mathcal{J}_1}$ can be eliminated from \cref{eq:app-strict-convex-kkt-right-derivative-kkt} to obtain a reduced LCP in the variable $\mathring{\lambda}_{\mathcal{J}_2}$.}

\end{enumerate}

\section{Proof of \crefbookmark{Theorem}{thm:minimum-distance-derivative}}%
\label{app:proof-minimum-distance-derivative}%
Since the assumptions of \cref{prop:strict-convex-kkt-solution-directional-derivative} are satisfied, the KKT solution is directionally differentiable at $x$ for any direction $\mathring{x}$, with the derivatives $(\mathring{z}, \mathring{\lambda})$ solving \cref{eq:strict-convex-kkt-right-derivative}.
Since the distance between two points $d(z) = \lVert z^i - z^j \rVert^2_2$ is a smooth function \revision{of $z$}, the minimum distance $h(x) = d(z^*(x))$ is directionally differentiable, with
\begin{equation*}
    D_x h(x)[\mathring{x}] = \nabla_z d(z^*(x))^\top \mathring{z}.
\end{equation*}
Since $(z^*(x), \lambda^*(x))$ satisfy the KKT conditions, we have, from \cref{subeq:strict-convex-kkt-gradient}, that
\begin{align*}
    \nabla_z d(z^*(x)) \ & = \revision{ \begin{bmatrix}
        2(z^{i*}(x) - z^{j*}(x)) \\
        -2(z^{i*}(x) - z^{j*}(x))
    \end{bmatrix}} = -D_z A(x, z^*(x))^\top {\lambda^*}(x) \\
    & = -D_z A_{\mathcal{J}_1}(x, z^*(x))^\top {\lambda_{\mathcal{J}_1}^*}(x),
\end{align*}
since $\lambda_{\mathcal{J}_2}^*(x) = 0$ and $\lambda_{\mathcal{J}_{0c}}^*(x) = 0$, by definition.
\revision{Recall that $\mathcal{J}_2$ is the degenerate active set of constraints and $\mathcal{J}_{0c}$ (the inactive set) is the complement of the active set of constraints $\mathcal{J}_0$ (see \cref{eq:strict-convex-kkt-active-set})}.
Combining the above two equations and \cref{subeq:strict-convex-kkt-right-derivative-feasibility}, we get,
\begin{align*}
    D_x h(x)[\mathring{x}] = & \ \nabla_z d(z^*(x))^\top \mathring{z} = -\lambda_{\mathcal{J}_1}^*(x)^\top D_z A_{\mathcal{J}_1}(x, z^*(x)) \mathring{z}, \\
    = & \ \lambda_{\mathcal{J}_1}^*(x)^\top D_x A_{\mathcal{J}_1}(x, z^*(x)) \mathring{x} = \lambda^*(x) D_x A(x, z^*(x)) \mathring{x}.
\end{align*}
Since the directional derivative $D_x h(x)[\mathring{x}]$ is a linear function of $\mathring{x}$, $h$ is differentiable at $x$.
By \cref{subdef:smooth-convex-set-c2}, $A$ is twice continuously differentiable, and by \cref{lem:strict-convex-unique-continuous-kkt-solution} \revision{$(z^*(\cdot), \lambda^*(\cdot))$ is a continuous function of $x$ in a neighborhood $\mathcal{N}(x)$ of $x$}.
Thus, $h$ is continuously differentiable \revision{on $\mathcal{N}(x)$}.
Finally, \revision{by \cref{prop:strict-convex-kkt-solution-directional-derivative}}, $(z^*(\cdot), \lambda^*(\cdot))$ is directionally differentiable at $x$, and so $D_x h$ is directionally differentiable at $x$.

Now let the functions $A^i$ and $A^j$ defining $\mathcal{C}^i$ and $\mathcal{C}^j$ respectively be smooth and $r^i = r^j = 1$.
Then, it is always true that \revision{the degenerate active set} $\mathcal{J}_2(x) = \emptyset$, as long as $h(x) > 0$.
\revision{Let $\mathcal{N}(x)$ be such that $h(x') > 0 \ \forall x' \in \mathcal{N}(x)$.}
In this case, the derivative of the KKT solution is always given by \cref{eq:strict-convex-kkt-solution-derivative}.
Since the cost and constraints of the minimum distance problem are smooth, we have, by the smoothness of $Q(x)$ and $V(x)$, that the KKT solutions are a smooth function \revision{of $x$}, and therefore that the minimum distance is smooth \revision{on $\mathcal{N}(x)$}.

\section{Proof of \crefbookmark{Theorem}{thm:cbf-qp}}%
\label{app:proof-cbf-qp}%

The following two properties of the Filippov operator will be used to prove \cref{thm:cbf-qp}.

\begin{property}[Distributive property]%
\label{property:distributive-filippov-operator}%
By the sum and product rules for the Filippov operator \cite[Eq.~(26-27)]{cortes2008discontinuous}, and continuity of $f^i$ and $g^i$, \cref{eq:filippov-operator-def} is equivalent to
\begin{equation}
\label{eq:filippov-simplification}
\dot{x}^i(t) \in f^i(x^i(t)) + g^i(x^i(t)) F[u^i_{fb}](x(t)).
\end{equation}

\end{property}

\begin{property}[Subset property]%
\label{property:invariance-filippov-operator}%
Let $u_{fb}: \mathcal{X} \rightarrow \mathcal{U}$, $\mathcal{U} \subseteq \mathbb{R}^m$, be a map such that for all $x$, $u_{fb}(x) \in \mathcal{F}_u(x)$, where $\mathcal{F}_u: \mathcal{X} \rightarrow 2^{\mathbb{R}^m}$ is an upper semi-continuous, pointwise non-empty, convex, and compact set-valued map.
Then, 
$F[u_{fb}](x) \subseteq \mathcal{F}_u(x)$, $\forall x \in \mathcal{X}$.

\end{property}

Let $u^*_{fb}$ be a measurable control law obtained as the solution of \cref{eq:cbf-qp}.
\revision{$u^*_{fb}$ need not be a Lipschitz continuous (or continuous) function, and so we consider the Filippov solutions of the closed-loop system}.
By \cref{prop:existence-filippov-solution}, given $x_0$, there exists a Filippov solution $x: [t_0, T] \rightarrow \mathcal{X}$ for \revision{the closed-loop dynamical system \cref{eq:n-affine-systems-feedback} (with $u^*_{fb}$ as the feedback control law) for some $T > t_0$ with $x(t_0) = x_0$}.
\revision{
By \cref{def:caratheodory-solution,def:filippov-solution}, $x$ satisfies, for almost all $t \in [t_0, T]$,
\begin{equation}
\label{eq:cbf-qp-proof-xdot}
    \dot{x}(t) \in F[f(x(t)) + g(x(t)) u^*_{fb}](x(t)),
\end{equation}
where $F$ is the Filippov operator (see \cref{eq:filippov-operator-def}) and $f$ and $g$ are defined as $f(x) := (f^i(x^i), f^j(x^j))$ and $g(x) := (g^i(x^i), g^j(x^j))$.
Let $t$ be any such time where \cref{eq:cbf-qp-proof-xdot} is satisfied.
By \cref{property:distributive-filippov-operator} and the continuity of $f$ and $g$, we can write \cref{eq:cbf-qp-proof-xdot} as
\begin{equation*}
\begin{split}
    \dot{x}(t) & \in F[f(x(t)) + g(x(t)) u^*_{fb}](x(t)) = f(x(t)) + g(x(t)) F[u^*_{fb}](x(t)).
\end{split}
\end{equation*}
So, for almost all $t \in [t_0, T]$, $\exists \tilde{u}(t) \in F[u^*_{fb}](x(t))$ such that
\begin{equation}
\label{eq:cbf-qp-proof-u-tilde}
    \dot{x}(t) = f(x(t)) + g(x(t)) \tilde{u}(t).
\end{equation}
Note that $\tilde{u}(t)$ can be different from $u^*_{fb}(x(t))$.
Next, we will show that $\tilde{u}(t)$ satisfies the CBF constraint \cref{subeq:cbf-qp-cbf} at $x(t)$.
This is true even if $\tilde{u}(t)$ is not equal to $u^*_{fb}(x(t))$; the latter satisfies the CBF constraint by definition (see \cref{eq:cbf-qp}).
}

Let $\mathcal{F}_u(x) \subseteq \mathcal{U}$ be the feasible set of inputs of the CBF-QP \cref{eq:cbf-qp} at $x$.
By continuity of $f$, $g$, $h$ (\cref{lem:strict-convex-lipschitz-continuity}), $(z^*,\lambda^*)$ (\cref{lem:strict-convex-unique-continuous-kkt-solution}), and $D_x A$ (\cref{subdef:smooth-convex-set-c2}), all the constraints in \cref{eq:cbf-qp} are continuous in $x$, and thus $\mathcal{F}_u$ is an upper semi-continuous set-valued map~\cite[Ex.~5.8]{rockafellar2009variational}.
By compactness of $\mathcal{U}$, $\mathcal{F}_u$ is pointwise compact.
Since \cref{eq:cbf-qp} is a QP, $\mathcal{F}_u$ is pointwise convex.
Lastly, by the assumption on the feasibility of \cref{eq:cbf-qp}, $\mathcal{F}_u$ is pointwise non-empty \revision{(even when $h(x) = 0$)}.
\revision{
By definition of $u^*_{fb}$, $u^*_{fb}(x) \in \mathcal{F}_u(x)$.
Thus, by \cref{property:invariance-filippov-operator}, we have that $F[u^*_{fb}](x) \subset \mathcal{F}_u(x)$ for all $x$.

Combining the above two conclusions, we get that for almost all $t \in [t_0, T]$, $\exists \tilde{u}(t) \in F[u^*_{fb}](x(t)) \subset \mathcal{F}_u(x(t))$.
Let $t$ be any time where the closed-loop trajectory $x(t)$ is differentiable.
Then, since $h$ is continuously differentiable (\cref{thm:minimum-distance-derivative}), $h(x(t))$ is differentiable at $t$.
}
Since $\tilde{u}(t) \in \mathcal{F}_u(x(t))$, by \cref{subeq:cbf-qp-cbf}, we have that,
\begin{equation*}
\begin{split}
    \dot{h}(x(t)) & = \lambda^*(x(t))^\top D_x A(x(t), z^*(x(t))) \dot{x}(t), \\
    & \revision{= \lambda^*(x(t))^\top D_x A(x(t), z^*(x(t))) (f(x(t)) + g(x(t)) \tilde{u}(t))}, \\
    & \geq -\alpha h(x(t)).
\end{split}
\end{equation*}
This holds for almost all $t \in [t_0, T]$, i.e., the CBF constraint \cref{eq:ncbf-constraint} is satisfied.
By assumption, $x_0$ is such that $h(x_0) > 0$.
Finally, by \cref{lem:ncbf-safety}, $h(x(t)) > 0$ for all $t \in [t_0, T]$ \revision{and Filippov solutions}, i.e., \revision{the closed-loop system is strongly safe}.

\section*{Acknowledgments}
The authors gratefully acknowledge Dr. Somayeh Sojoudi at the University of California, Berkeley, for her valuable comments on the properties of parametric optimization problems.

\bibliographystyle{siamplain}
\bibliography{references}
\end{document}